\documentclass{article}

\usepackage{microtype}
\usepackage{graphicx}
\usepackage{subcaption}
\usepackage{booktabs} 

\usepackage{hyperref}

\usepackage[accepted]{icml2026}

\usepackage{amsmath}
\usepackage{amssymb}
\usepackage{mathtools}
\usepackage{amsthm}

\usepackage[capitalize,noabbrev]{cleveref}

\theoremstyle{plain}
\newtheorem{theorem}{Theorem}[section]
\newtheorem{proposition}[theorem]{Proposition}

\theoremstyle{definition}

\newtheorem{assumption}[theorem]{Assumption}
\theoremstyle{remark}
\newtheorem{remark}[theorem]{Remark}

\usepackage[textsize=tiny]{todonotes}

\icmltitlerunning{Aggregate Models, Not Explanations}

\newcommand{\est}{\ensuremath{f_{\mathcal{D}_n}}}

\newcommand{\both}{\ensuremath{f_{\theta, \mathcal{D}_n }}}

\newcommand{\stability}{\textit{stability}~}
\newcommand{\loco}{\textit{LOCO}~}
\newcommand{\cfi}{\textit{CFI}~}
\newcommand{\rashomon}{\textit{Rashomon effect}~}
\newcommand{\sage}{\textit{SAGE}~}
\newcommand{\e}[1]{\mathbb{E}\left[#1\right]  } 

\begin{document}

\twocolumn[
  \icmltitle{
  Aggregate Models, Not Explanations: Improving Feature Importance Estimation
  }



  \icmlsetsymbol{equal}{*}

  \begin{icmlauthorlist}
    \icmlauthor{Joseph Paillard}{roche,inria}
    \icmlauthor{Angel Reyero Lobo}{inria,imt}
    \icmlauthor{Denis A. Engemann}{roche}
    \icmlauthor{Bertrand Thirion}{inria}
  \end{icmlauthorlist}

  \icmlaffiliation{roche}{Roche Pharma Research \& Early Development, Roche Innovation Center Basel, F. Hoffmann-La Roche Ltd, Basel, Switzerland}
  \icmlaffiliation{inria}{Universite Paris-Saclay, Inria, CEA, Palaiseau, France}
  \icmlaffiliation{imt}{Institut de Mathematiques de Toulouse, UMR5219 Universite de Toulouse, France}

  \icmlcorrespondingauthor{Joseph Paillard}{joseph.paillard@roche.com}

  \icmlkeywords{Machine Learning, ICML}

  \vskip 0.3in
]



\printAffiliationsAndNotice{}  

\begin{abstract}
    Feature-importance methods show promise in transforming machine learning models from predictive engines into tools for scientific discovery. 
    However, due to data sampling and algorithmic stochasticity, expressive models can be unstable, leading to inaccurate variable importance estimates and undermining their utility in critical biomedical applications. 
    Although ensembling offers a solution, deciding whether to explain a single ensemble model or aggregate individual model explanations is difficult due to the nonlinearity of importance measures and remains largely understudied. 
    Our theoretical analysis, developed under assumptions accommodating complex state-of-the-art ML models, reveals that this choice is primarily driven by the model's excess risk. 
    In contrast to prior literature, we show that ensembling at the model level provides more accurate variable-importance estimates, particularly for expressive models, by reducing this leading error term. 
    We validate these findings on classical benchmarks and a large-scale proteomic study from the UK Biobank.
\end{abstract}

\section{Introduction}
Machine-learning (ML) models have become more complex to address the challenge of high-dimensional data across heterogeneous biomedical datasets, including multi-omics, medical imaging, and large-scale electronic health records. This has pushed the boundaries of biomedical research.
At the same time, explainability methods have emerged, promising to translate this predictive power into biological and medical insights  \cite{mi2021permutation, novakovsky2023obtaining}. 
Such methods, sometimes also referred to as explainable AI (XAI), include feature importance measures that quantify the contribution of input variables and can help identify the drivers of the underlying data-generating mechanism~\cite{galit_explain_predict, Ewald_2024}; hereafter, we use the terms feature importance and explanation interchangeably.
Feature importance methods have high potential for scientific discovery; for example, in a model that predicts disease risk from multi-omic data, they can enable the identification of therapeutic targets and the development of biomarkers for prognosis or treatment response.
%
However, the overparameterization and stochastic training inherent to complex models introduce significant \emph{instability}, making them sensitive to hyperparameters, optimization stochasticity, and data sampling~\cite{bottou2007tradeoffs}.
Although there are strategies to maintain predictive performance despite such instability, the consequences for feature importance estimation are not well understood. 
Notably, when this instability yields feature importance estimates that vary across models, it diminishes their usefulness in critical biomedical applications.
For instance, this occurs when making decisions about investing in the experimental validation of therapeutic targets or implementing public health measures for identified risk factors.
Therefore, simple models are often preferred to characterize underlying biological mechanisms due to their intrinsic interpretability. 
Directly analyzing coefficients in a linear model or rules in a decision tree, for instance, is typically favored over applying explainability methods to expressive architectures~\cite{watanabe2023multiomic}.

This instability, in which models with similar performance provide conflicting explanations of the same data, is known as the \rashomon\cite{donnelly2023rashomon}.
It was named after the classic Japanese film Rashomon, which depicts conflicting witness accounts of the same incident during a trial~\cite{kurosawa1950rashomon}.
%
Addressing this instability is essential in high-stakes settings where inconsistent explanations can undermine the reliability and adoption of expressive architectures.
This problem has consequently received substantial attention, with research focusing on aggregating information across multiple models to mitigate dependence on any single instance \cite{fisher2019all, donnelly2023rashomon}.
While this line of work demonstrates that aggregating importance scores derived from individual models improves the stability of explanations, there remains a lack of formal analysis characterizing the feature importance estimation error.
Our theoretical analysis identifies the primary error terms in this process and establishes the effectiveness of the alternative strategy: estimating importance from a model-level ensemble that aggregates predictions, rather than the standard approach of aggregating individual explanations.
%
Importantly, due to the nonlinear nature of feature importance measures, these two approaches are not equivalent.
This work aims to systematically study these aggregation strategies to improve the robustness of model-agnostic feature importance estimation.
Our contributions are twofold:
\begin{itemize}
    \item \textbf{Theoretical analysis}:
    We analyze feature importance estimation error under relaxed assumptions suitable for modern ML models. 
    This framework enables a rigorous comparison of aggregation strategies leading to actionable insights and provides an error decomposition that formalizes the concepts of \stability and the \textit{Rashomon effect}.
    %
    \item \textbf{Empirical results}: We perform a systematic comparison of ensembling approaches across established benchmarks using expressive architectures. 
    We demonstrate the framework's utility on high-dimensional data from a large-scale proteomic cohort in the UK Biobank.
\end{itemize}
%
Our analysis covers Leave-One-Covariate-Out (\textit{LOCO}, \autoref{subsection:loco}), marginal and conditional Shapley Additive Global importancE (\textit{SAGE}, \autoref{subsec:sage}), Conditional Feature Importance (\textit{CFI}, \autoref{subsec:cfi}), Permutation Feature Importance (\textit{PFI}, \autoref{subsec:pfi}), and Integrated Gradients (\textit{IG}, \autoref{section:ig}), revealing that the benefits of aggregating models vary across feature importance measures. 

\section{Problem Setting}
\label{section:problem_setup}
\paragraph{Notations}
Let $(X, Y)$ be paired random variables following an unknown distribution $P$.
We consider a supervised learning model $f_{\theta, \mathcal{D}_n}$ trained to predict $Y$ from $X$, where the subscripts $\mathcal{D}_n$ and $\theta$ represent the model's dependence on the training set of size $n$ and algorithmic randomization (e.g., initialization), respectively. 
We define the expected risk of a model $f$, for a loss $\mathcal{L}$, as $\mathcal{R}(f) = \mathbb{E}[\mathcal{L}(f(X), Y)]$, and its empirical risk on a test set as $\mathcal{R}_n(f)$.
For simplicity, we assume that the test set is of comparable size $O(n)$, and we denote $n$ the overall sample complexity of the experiment.
The excess risk of a model $f$ is defined as $\mathcal{E}(f) = \mathcal{R}(f) - \mathcal{R}(f_\star)$, where $f_\star$ is the risk minimizer over the data distribution. 
%
We omit the dependence of $\mathcal{E}$ on the model when the context is unambiguous.
Finally, $\psi(f, j)$ denotes the importance of the $j^{th}$ feature for model $f$. 
For conciseness, we denote the true importance (derived from $f_\star$) as $\psi_\star(j)$ and the empirical estimate from a trained model as $\psi_n(j)$.

Many variable importance measures (VIMs) rely on risk differences.
For instance, \textit{LOCO} compares the risk of a model, $f_\star^{-j}$ restricted to the subset of features excluding the $j^{th}$, to that of the full model.
That is 
\begin{align}
    \label{eq:loco}
    \psi^\mathrm{loco}_\star(j) = \mathcal{R}(f_\star^{-j}) - \mathcal{R}(f_\star).
\end{align}
%
It estimates the total Sobol index, a well-studied metric for quantifying the predictive contribution of a variable \cite{homma1996importance}. 
CFI targets the same quantity.
Other measures of importance, such as \textit{SAGE}, rather compare risks of restricted models over all possible feature subsets.
Consequently, analyzing the error in estimating the risk of a given model is a prerequisite for understanding the behavior of these importance measures in real data. 
To this end, \citealt{williamson2023general} decompose the risk estimation error as follows:
\begin{align}
    \label{eq:williamson_decomposition}
        &\mathcal{R}_n\left(\both\right) - \mathcal{R}(f_\star) = \\ 
        \nonumber
        &\underbrace{\left[\mathcal{R}_n(f_\star) - \mathcal{R}(f_\star) \right]}_{A} + \underbrace{\left[\mathcal{R}(\both) - \mathcal{R}(f_\star)\right]}_{\mathcal{E}} + r_n
    \end{align}
where $r_n=\left[\mathcal{R}_n(\both) - \mathcal{R}(\both) \right] - \left[\mathcal{R}_n(f_\star) - \mathcal{R}(f_\star)\right]$, represents a second-order remainder term, which is shown to be negligible under standard assumptions.
The first term, $A$, captures the dependence of the error on the finite test set, as it is an approximation of the expected risk in the absence of error in the model.
The second term, the excess risk $\mathcal{E}$, quantifies the discrepancy between the trained model and the oracle risk minimizer, capturing errors stemming from the training process.
Following the framework of \citealt{bottou2007tradeoffs}, the excess risk can be decomposed into three components:
\begin{align}
        \label{eq:excess_risk_decomposition}
        \mathcal{E} = \mathcal{E}_\mathrm{app} + \mathcal{E}_\mathrm{est} + \mathcal{E}_\mathrm{opt}
\end{align}
which are illustrated in \autoref{fig:concept_error}. 
First, the approximation error, $\mathcal{E}_\mathrm{app}$, captures how well the chosen function class $\mathcal{F}$ can approximate the data-generating process $f_\star$.
Second, the estimation error, $\mathcal{E}_\mathrm{est}$, measures the impact of learning on a finite training set $\mathcal{D}_n$. 
This term accounts for sampling \textit{stability}, reflecting that changes in the training data can result in a different estimation error.
%
It can be mitigated using \textit{bagging}, which has been shown to improve convergence rates for random forests \cite{biau2016random} and reduce variance through a smoothing effect \cite{buhlmann2002analyzing}.
Lastly, the optimization error, $\mathcal{E}_\mathrm{opt}$, reflects the impact of algorithmic stochasticity (parameterized by $\theta$) on the excess risk. 
This term is closely linked to the \rashomon because arbitrary optimization choices, such as random weight initialization or feature splits, lead to distinct equally predictive models, and consequently distinct importances.
It can be addressed by aggregating predictions from models trained with different randomizations \cite{allen2020towards}, which we refer to as \textit{voting}.
\begin{figure}[h!]
    \centering
    \includegraphics[width=0.9\linewidth]{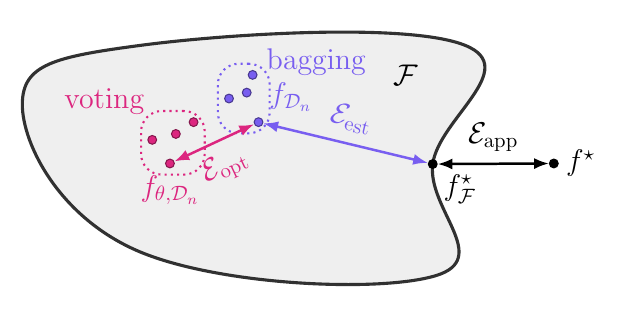}
    \caption{ 
        \textbf{Approximation, estimation, optimization trade-off.} 
        To estimate the data-generating process $f_\star$ using a function class $\mathcal{F}$, we first account for the approximation error $\mathcal{E}_\mathrm{app}$ inherent to the best possible function $f^*_{\mathcal{F}}$ in the class. 
        Minimizing empirical risk over a finite training set $\mathcal{D}_n$ introduces estimation error $\mathcal{E}_\mathrm{est}$, resulting in $\est$, which can be reduced via bagging. 
        Finally, the stochasticity of the learning procedure adds optimization error $\mathcal{E}_\mathrm{opt}$, leading to the final estimate $\both$. 
        This error is mitigated by ensembling over multiple random initializations (voting).
    }
    \label{fig:concept_error}
\end{figure}
A frequent critique of variable importance analysis is its dependence on a specific model instance, which hinders inference about the underlying data-generating process. 
Consequently, different models—arising from sampling instability or algorithmic stochasticity—may yield conflicting explanations despite achieving similar predictive performance. 
To move beyond the limitations of a single model, one might naturally turn to ensembling; however, this raises a practical question: 
\textbf{Should we derive feature importance from a model-level ensemble, or aggregate the importance scores of individual sub-models?
}
Due to the non-linearity of common loss functions, such as mean squared error (MSE) or cross-entropy, used to define importance metrics, these two strategies are not equivalent in general. 
Formally, let $f_\mathrm{ens} = \frac{1}{B} \sum_{b=1}^{B} f_b$ denote an ensemble constructed from individual models $\{f_b\}_{b=1}^B$, where the subscript $b$ captures the dependence of each constituent model on its specific training sample $\mathcal{D}_{n,b}$ and algorithmic randomization $\theta_b$.
For classification tasks, the output of each model $f_b$ consists of predicted class probabilities.
In general, the \textit{sub-models} strategy, $\frac{1}{B}\sum_{b=1}^{B} \psi(f_b, j)$, which aggregates the importances of individual models, is not equivalent to the \textit{ensemble} strategy, $\psi(f_\mathrm{ens}, j)$, which measures the importance of the model-level ensemble. 
This distinction reveals a fundamental trade-off in estimation accuracy. 
While the \textit{sub-models} approach averages individual statistics to reduce the variance of the estimate, it cannot reduce the individual excess risks that contribute to the bias. 
In contrast, the \textit{ensemble} approach acts directly on these error terms by providing a more accurate predictor, thereby reducing the bias.

\section{Related Work}
\label{section:related_work}
Feature importance methods aim to quantify the contribution of variables within an underlying data-generating process. 
While examining the coefficients of linear models offers an intuitive approach, their limited expressivity often fails to capture complex relationships. 
Consequently, we focus on \textit{model-agnostic} feature importance methods. 
Furthermore, many popular techniques, such as Mean Decrease in Impurity (MDI) or permutation importance, are heuristics. 
These methods have been shown to be inconsistent as they do not converge asymptotically to a meaningful population-level quantity \cite{Erwan2020TreesFA}. 
We therefore restrict our analysis to methods that estimate well-defined theoretical indices \cite{reyero2025principled}. 
Specifically, we consider Leave-One-Covariate-Out (\textit{LOCO}, \citealt{williamson2023general}) and Conditional Feature Importance (\textit{CFI}, \citealt{chamma2023statistically}). 
These metrics estimate the total Sobol index and facilitate the identification of the \textit{Markov blanket}, which is the minimal set of variables required for optimal prediction \cite{benard2022mean}. 
We also examine Shapley Additive Global Importance (\textit{SAGE}, \citealt{covert2020understanding}), a game-theoretic approach that identifies all variables  linked to the outcome.
%

%
In their framework for \textit{model-agnostic} importance, \citealt{williamson2023general} primarily focus on test-set variance (term $A$ in \autoref{eq:williamson_decomposition}), by assuming the excess risk is asymptotically negligible, i.e. $\mathcal{E}=o_P(n^{-1/2})$.
Specifically, to justify this point, the proofs of Theorems 1 and 2 in \citealt{williamson2023general} require a \textit{minimum rate of convergence} (Assumption B1) and Theorem 1 additionally requires that the model belongs to a Donsker class, hence further limiting its complexity. 
Precisely, denoting $||\cdot||_2$ the $L_2(P)$ over the data distribution $P$, the \textit{minimum rate of convergence} requires that $||\both - f_\star||_2 = o_p(n^{-1/4})$.
However, as detailed in \citealt{bach2024learning}, the convergence rates of overparametrized neural networks typically suffer from the curse of dimensionality, scaling at rates slower than $n^{-1/(d+2)}$ for dimension $d$. 
Similarly, \citealt{biau2016random} indicates that random forests exhibit comparable behavior regarding dimensionality.
While the existence of a lower-dimensional latent space may mitigate these effects, it remains unlikely that the convergence rates for these models satisfy the $o_P(n^{-1/4})$ requirement.
By relying on these strict convergence hypotheses, the practical reach of the \textit{model-agnostic} framework is technically narrowed, as its guarantees may not extend to highly expressive algorithms.
In \autoref{section:theoretical}, we present an alternative analysis that relies on weaker assumptions that are more realistic for modern machine learning (ML) algorithms. This analysis shows that the estimation error is, in fact, governed by $\mathcal{E}$, the error from the ML model.

The various sources of model estimation error presented in \autoref{eq:excess_risk_decomposition} can result in discrepancies in feature importance estimates. 
In the worst case, this may lead to inconsistent importance rankings or unreliable feature selection.
To mitigate this, \citealt{donnelly2023rashomon} proposed the Rashomon Importance Distribution (RID), a framework that derives importance distributions across the ``Rashomon set" \cite{fisher2019all} of well-performing models.
Setting aside the difficulty of estimating distributions over entire model classes, which has only been done for simple models, this strategy (referred to as \textit{sub-models}) aggregates the importance scores of individual models.
However, it has not been compared to the \textit{ensemble} alternative, which consists of ensembling predictions at the model level and evaluating the importance of the resulting, more accurate ensemble.
Furthermore, the lack of consensus on a formal definition of \textit{stability} in this literature \cite{donnelly2023rashomon} makes it difficult to establish a rigorous comparison between these two strategies, which may explain why model-level ensembling has been overlooked. 
While \textit{stability} often echoes the idea of variance, the ultimate goal is to accurately estimate feature importance; therefore, bias should not be ignored. 
We prefer to use the total estimation error, which captures both components.
Prior work has demonstrated that ensembling can be particularly effective at reducing model error, thereby addressing bias in variable importance estimates.
\citealt{allen2020towards}, for instance, demonstrated that for overparametrized neural networks addressing a multi-view problem, employing an ensemble approach substantially reduces the optimization error component. 
A multi-view problem is characterized by data where the support consists of multiple features; however, for a small fraction of samples, a specific ``view" (feature of the support) does not contribute to the data-generating process (see Definition 3.3 in \citealt{allen2020towards}). 
Specifically, in the context of a $k$-class classification problem on such data, \citet{allen2020towards} demonstrate that an ensemble of a small number of neural networks (typically fewer than 10) can dramatically reduce the optimization error. 
We hypothesize that this sharp reduction in individual model error through the \textit{ensemble} strategy leads to a significant reduction in bias compared to the \textit{sub-models} approach, therefore improving feature importance estimation.

\section{Theoretical analysis}
\label{section:theoretical}
In this section, we identify the components contributing to the estimation error of VIMs that rely on risk differences. 
This analysis enables a rigorous comparison between two distinct strategies: measuring the importance of a single ensemble predictor (\textit{ensemble}) or aggregating the importance values derived from individual predictors (\textit{sub-models}).
%
This comparison highlights the critical error terms to target for improving importance estimation efficiency, especially when the excess risk of the model dominates the total error due to slow convergence rates.
Our theoretical findings are subsequently validated through empirical evaluations across diverse datasets.
The detailed proofs of all results can be found in \autoref{section:proofs}.
\subsection{\loco}
\label{subsection:loco}
As defined in \autoref{eq:loco}, \loco consists in a difference of risks.
We therefore start by deriving a result on the impact of finite sample error on the estimation of the Bayes risk $\mathcal{R}(f_\star)$. 
To relax the constraints on model complexity imposed by prior work (see \autoref{section:related_work}) and to present a theory that encompasses modern ML algorithms, we rely on two weaker assumptions.
\begin{assumption}[\textit{Loss consistency}]
    \label{assumption:loss_continuity}
    \both is a minimizer of the risk:
    $||\mathcal{L}(\both(X), Y) - \mathcal{L}(f_\star(X), Y)||_{2}  = o_P(1)$
\end{assumption}
\begin{assumption}[\textit{Finite variance of the loss}]
    \label{assumption:finite_variance}
    The loss evaluated at the risk minimizer $f_\star$, has a finite variance with respect to the data distribution:
    $\mathrm{Var}\left(\mathcal{L}(f_\star(X), Y)\right) < \infty$.
\end{assumption}

Assumption \ref{assumption:loss_continuity} reflects the intuitive idea that to provide a relevant measure of importance, a model must first be effective at the prediction task and thereby capture the underlying data-generating process.
Assumption \ref{assumption:finite_variance} is a mild condition, also adopted by \citealt{bottou2007tradeoffs}. 
Notably, it is weaker than the sub-Gaussian assumption standard in statistical learning, which requires the existence of all moments. 
For instance, for additive noise, $Y=f_\star(X)+\epsilon$ with the MSE loss, this assumption only imposes that $\mathrm{Var}(\epsilon^2) < \infty$.
%
%

%
\begin{proposition}[Bayes risk estimation error]
    \label{prop:risk_diff}
    Under Assumptions \ref{assumption:loss_continuity} and \ref{assumption:finite_variance}, we have that
    \begin{align}
        \label{eq:risk_diff} \mathcal{R}_n\left(\both\right) - \mathcal{R}(f_\star) = \mathcal{E} + O_P(n^{-1/2}).
    \end{align}
\end{proposition}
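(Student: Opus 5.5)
We start from the decomposition of \autoref{eq:williamson_decomposition}, which is an algebraic identity:
\begin{align*}
\mathcal{R}_n(\both) - \mathcal{R}(f_\star) = A + \mathcal{E} + r_n .
\end{align*}
It then suffices to show that $A = O_P(n^{-1/2})$ and $r_n = O_P(n^{-1/2})$; in fact we aim for $r_n = o_P(n^{-1/2})$. Throughout we use that the test set of size $O(n)$ is independent of the training data $\mathcal{D}_n$ and of the randomization $\theta$. Conditionally on $(\mathcal{D}_n,\theta)$, the model $\both$ is therefore a fixed function, and $\mathcal{R}_n(\both)$ is an average of i.i.d.\ terms.

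For $A$, note that $A = \frac{1}{m}\sum_{i=1}^m \bigl(\mathcal{L}(f_\star(X_i),Y_i) - \mathcal{R}(f_\star)\bigr)$ with $m \asymp n$. This is a centred i.i.d.\ mean. By Assumption~\ref{assumption:finite_variance} its variance is $\mathrm{Var}(\mathcal{L}(f_\star(X),Y))/m$, so Chebyshev's inequality gives $A = O_P(n^{-1/2})$. The central limit theorem would even give a Gaussian limit, but it is not needed.

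For $r_n$, set $g_n(x,y) = \mathcal{L}(\both(x),y) - \mathcal{L}(f_\star(x),y)$, so that $r_n = (\mathcal{R}_n - \mathcal{R})(g_n)$ is an empirical-process increment evaluated at a data-dependent function. We avoid Donsker-type arguments and condition on $(\mathcal{D}_n,\theta)$ instead. Given $(\mathcal{D}_n,\theta)$, $r_n$ is a centred mean of $m$ i.i.d.\ terms, so
\begin{align*}
\mathbb{E}\left[r_n^2 \mid \mathcal{D}_n,\theta\right] = \frac{\mathrm{Var}(g_n \mid \mathcal{D}_n,\theta)}{m} \le \frac{\|g_n\|_2^2}{m},
\end{align*}
where $\|\cdot\|_2$ is the $L_2(P)$ norm over a fresh draw of $(X,Y)$. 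Conditional Chebyshev then gives, for any $\epsilon > 0$,
\begin{align*}
P\bigl(\sqrt{m}\,|r_n| > \epsilon \mid \mathcal{D}_n,\theta\bigr) \le \min\bigl(1, \|g_n\|_2^2/\epsilon^2\bigr).
\end{align*}
Assumption~\ref{assumption:loss_continuity} states exactly that $\|g_n\|_2 = o_P(1)$. The right-hand side is bounded by $1$ and tends to $0$ in probability, so dominated convergence after taking expectations yields $r_n = o_P(n^{-1/2})$.

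Combining the three terms gives $\mathcal{R}_n(\both) - \mathcal{R}(f_\star) = \mathcal{E} + O_P(n^{-1/2}) + o_P(n^{-1/2}) = \mathcal{E} + O_P(n^{-1/2})$, which is the claim.

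The main obstacle is the remainder $r_n$. It is here that the test/train sample splitting must be made explicit, because without independence one would need the Donsker conditions the paper wants to avoid. A second point of care is that $\|g_n\|_2$ may fail to be finite on some events, and the $\min(1,\cdot)$ truncation is what handles this. Note also that Assumption~\ref{assumption:loss_continuity} alone (without a rate) suffices, because only $O_P(n^{-1/2})$ is claimed. Moreover, $\mathcal{E}$ is left unbounded and carried as the leading term, which is the point of the proposition.
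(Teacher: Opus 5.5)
Your proof is correct and follows the paper's route: the same decomposition $A + \mathcal{E} + r_n$, with $A$ controlled using Assumption~\ref{assumption:finite_variance}, and $r_n = o_P(n^{-1/2})$ obtained from the sample-splitting (conditional Chebyshev) argument combined with Assumption~\ref{assumption:loss_continuity}. The differences are cosmetic: you prove inline what the paper imports as Lemma~1 of \citet{kennedy2024semiparametric}, including the bounded-convergence step the paper leaves implicit, and you use Chebyshev instead of the CLT for $A$.
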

\begin{proof}[Proof sketch] 
     The proof follows the error decomposition presented in \autoref{eq:williamson_decomposition}. 
    We first bound the remainder term $r_n$, by applying Lemma 1 from \citet{kennedy2024semiparametric} which leads to $r_n=O_P\left(||\mathcal{L}(f_{\theta,\mathcal{D}_{n}}(X), Y) - \mathcal{L}(f_{*}(X), Y)||_2/\sqrt{n}\right)$. 
    Assumption \ref{assumption:loss_continuity} \textit{(loss consistency)} then ensures that this remainder is $r_n=o_P(n^{-1/2})$. 
    Finally, we observe that the term A comes from the need to estimate the expectation of the loss using a finite test set. Note that Assumption~\ref{assumption:finite_variance} \textit{(finite variance of the loss)} enables the use of the CLT to establish that the test-set error term \(\mathcal{R}_{n}(f_{*}) - \mathcal{R}(f_{*})\) is \(O_{P}(n^{-1/2})\).
    Identifying the remaining term as the excess risk $\mathcal{E}$ leads directly to \autoref{eq:risk_diff}.
\end{proof}
%
This analysis differs from \citealt{williamson2023general}, in which the assumptions made on the convergence rate of the ML model led to $\mathcal{E}=o_P(n^{-1/2})$. In contrast, we argue that these assumptions are not satisfied in practice, leading to an estimation error term mainly depending on $\mathcal{E}$.  
\begin{remark}
    From this leading error term, we can explain the estimation process for our target quantity, which primarily depends on the estimation of the model $f_\star$. 
    Furthermore, the decomposition in \eqref{eq:excess_risk_decomposition} allows us to anchor empirical phenomena within a formal framework. 
    While \textit{stability} and the \rashomon often appear in the literature as loosely defined concepts regarding sampling and optimization inconsistencies, we explicitly map them to the concrete estimation ($\mathcal{E}_\mathrm{est}$) and optimization ($\mathcal{E}_\mathrm{opt}$) error terms, which are well-defined.
    %
\end{remark}
\begin{remark}
    \label{remark:risk_diff}
    This proposition highlights the distinction between the two ensembling strategies:
    \begin{itemize}
        \item Importance-level ensembling (\textit{sub-models}): this strategy aggregates the results of \autoref{eq:risk_diff} over individual models.
         By doing so, it may reduce the stochastic remainder $O_P(n^{-1/2})$, but not the excess risks $1/B\sum\mathcal{E}(f_\mathrm{b})$, which equals the individual excess risk $\mathcal{E}(f_\mathrm{b})$ for $\mathrm{b}=1,\cdots, B$ due to the linearity of the expectation. Thus, it does not affect the main bias.
        \item Model-level ensembling (\textit{ensemble}): 
        For any convex loss, Jensen’s inequality ensures that the excess risk of this ensemble is reduced: $\mathcal{E}(f_\mathrm{ens}) \leq 1/B\sum_{b=1}^B\mathcal{E}(f_\mathrm{b})$. 
        Moreover, if the loss is strictly convex, such as the quadratic loss, this inequality is strict just by assuming that the individual models differ.
    \end{itemize}
    These observations reveal a regime where measuring the importance of an ensemble is preferred.
    Specifically, in the bias-dominant regime where the convergence rate of the excess risk is slower than $n^{-1/2}$.
    In this regime, acting on the excess risk via model-level ensembling reduces the leading error term. 
    Conversely, if the convergence rate is faster than $n^{-1/2}$, the tradeoff between variance reduction in the sub-models and bias reduction in the ensemble is not theoretically obvious, as the remainder term itself is impacted by the excess risk (see term $r_n$ in the proof \ref{proof:prop_risk}).
    Empirical results showing the respective contributions of the bias and variance terms  as a function of $n$ are presented in \autoref{fig:bias_variance}.
\end{remark}

The results from Proposition \ref{prop:risk_diff} regarding the estimation of the Bayes risk provide the basis for deriving the estimation error of the \loco measure.
This relationship is summarized by the following theorem, which demonstrates that for complex models with potentially slow convergence rates, the importance estimation error is governed by the model's excess risk.
\begin{theorem}[Excess risk dominates importance estimation with \loco]
    \label{theorem:loco}
    Denoting $\mathcal{E}^{-j}=\mathcal{R}(\both^{-j}) - \mathcal{R}(f_\star^{-j})$, the excess risk of the restricted model excluding feature $j$. 
    The error in estimating the importance of a feature $j$ can be decomposed as follows,
    \begin{align}
    \label{eq:error_loco}
       \psi^\mathrm{loco}_n(j) - \psi^\mathrm{loco}_\star(j)  &= \mathcal{E}^{-j} -  \mathcal{E} + O_P(n^{-1/2})
    \end{align}
\end{theorem}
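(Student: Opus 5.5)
The plan is to write the empirical \loco estimate as a difference of two empirical risks, $\psi^\mathrm{loco}_n(j) = \mathcal{R}_n(\both^{-j}) - \mathcal{R}_n(\both)$, and to subtract the population target $\psi^\mathrm{loco}_\star(j) = \mathcal{R}(f_\star^{-j}) - \mathcal{R}(f_\star)$ term by term. This gives
\begin{align*}
\psi^\mathrm{loco}_n(j) - \psi^\mathrm{loco}_\star(j) = \left[\mathcal{R}_n(\both^{-j}) - \mathcal{R}(f_\star^{-j})\right] - \left[\mathcal{R}_n(\both) - \mathcal{R}(f_\star)\right],
\end{align*}
so each bracket is exactly the left-hand side of \autoref{eq:risk_diff}, once for the full model and once for the restricted one.

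For the second bracket, I invoke Proposition \ref{prop:risk_diff} directly under Assumptions \ref{assumption:loss_continuity} and \ref{assumption:finite_variance}. This yields $\mathcal{E} + O_P(n^{-1/2})$.

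For the first bracket, I apply the same proposition to the restricted learning problem. The covariates are $X^{-j}$, the oracle is $f_\star^{-j}$, and the trained model is $\both^{-j}$. This requires that the two assumptions also hold for the restricted problem: loss consistency of $\both^{-j}$ towards $f_\star^{-j}$, and $\mathrm{Var}(\mathcal{L}(f_\star^{-j}(X^{-j}),Y))<\infty$. I will state these as the natural analogues of the assumptions, since \loco already requires fitting a consistent restricted learner. Re-running the decomposition \autoref{eq:williamson_decomposition} then gives $\mathcal{E}^{-j} + O_P(n^{-1/2})$, with the remainder $r_n^{-j}=o_P(n^{-1/2})$ by Lemma 1 of \citet{kennedy2024semiparametric} and the test-set term $O_P(n^{-1/2})$ by the CLT.

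Subtracting the two brackets and using that a difference of $O_P(n^{-1/2})$ terms is $O_P(n^{-1/2})$ gives \autoref{eq:error_loco}. No independence between the two brackets is needed, because $O_P$ bounds combine through a union bound.

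The main obstacle is the justification for the restricted model. The paper's assumptions are stated only for $\both$, so I must make explicit that the analogous conditions hold for $\both^{-j}$. For the variance condition I would first try deriving it from the full-model one; for squared loss, $f_\star^{-j}=\mathbb{E}[f_\star(X)\mid X^{-j}]$, and conditional Jensen bounds the relevant moments. Failing that, I will simply assume it alongside loss consistency for $\both^{-j}$.

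A second, minor point concerns the test set. If the same test set is used for both risks, the CLT argument still applies to each term separately, which suffices for the $O_P(n^{-1/2})$ conclusion.
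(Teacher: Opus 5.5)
Your proposal is correct and is essentially the paper's own proof: you split $\psi^\mathrm{loco}_n(j)-\psi^\mathrm{loco}_\star(j)$ into a restricted-model bracket and a full-model bracket, then apply Proposition~\ref{prop:risk_diff} to each. You also require explicitly that Assumptions~\ref{assumption:loss_continuity} and~\ref{assumption:finite_variance} hold for $\both^{-j}$ and $f_\star^{-j}$, which the paper uses without stating, and your fallback of assuming the restricted variance condition outright is the safe choice, because the conditional-Jensen route from the full-model condition would also need a fourth-moment bound such as $\mathbb{E}[Y^4]<\infty$ (finite $\mathrm{Var}(\epsilon^2)$ alone does not control the moments of $f_\star(X)-\mathbb{E}[f_\star(X)\mid X^{-j}]$).
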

From the above formulation, considering an ensembling strategy that reduces variance while leaving the bias unchanged, we obtain the following result.
%
\begin{proposition}[Impact of ensembling on importance estimation]
    \label{prop:bagging}
    Consider an ensemble $f_\mathrm{ens}$ of $B$ models with pairwise output correlation $\rho$. 
    For \loco importance with the MSE loss, let $\psi_\star$ be the true importance, and denote the estimated importances as $\psi_n^\mathrm{ens} = \psi (f_\mathrm{ens})$ and $\psi_n^\mathrm{sub}=\frac{1}{B}\sum_{b=1}^B\psi(f_b)$. 
    Then:
    \begin{align}
    \nonumber
        \psi_n^\mathrm{ens}-\psi_\star=(\psi_n^\mathrm{sub}-\psi_\star) \left(\rho + \frac{1-\rho}{B}\right) + O_P(n^{-1/2})
    \end{align}
\end{proposition}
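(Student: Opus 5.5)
By Theorem~\ref{theorem:loco}, applied both to $f_\mathrm{ens}$ and to each $f_b$ (and their restricted counterparts), the statement reduces to an identity between excess risks. On one side,
\begin{align*}
\psi_n^\mathrm{ens}-\psi_\star = \mathcal{E}^{-j}(f_\mathrm{ens}) - \mathcal{E}(f_\mathrm{ens}) + O_P(n^{-1/2}).
\end{align*}
On the other side, averaging over $b$ and using that a finite average of $O_P(n^{-1/2})$ terms is still $O_P(n^{-1/2})$ ($B$ fixed),
\begin{align*}
\psi_n^\mathrm{sub}-\psi_\star = \frac{1}{B}\sum_{b=1}^B \bigl[\mathcal{E}^{-j}(f_b) - \mathcal{E}(f_b)\bigr] + O_P(n^{-1/2}).
\end{align*}
It therefore suffices to prove $\mathcal{E}(f_\mathrm{ens}) = \bigl(\rho + \frac{1-\rho}{B}\bigr)\frac{1}{B}\sum_b \mathcal{E}(f_b)$, and the same identity for the restricted models, up to $O_P(n^{-1/2})$.

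For the MSE loss, $\mathcal{E}(f) = \|f - f_\star\|_2^2$. Write $g_b = f_b - f_\star$, so that $f_\mathrm{ens} - f_\star = \frac{1}{B}\sum_b g_b$. Expanding gives
\begin{align*}
\mathcal{E}(f_\mathrm{ens}) = \frac{1}{B^2}\Bigl[\sum_b \|g_b\|_2^2 + \sum_{b\neq b'} \langle g_b, g_{b'}\rangle\Bigr].
\end{align*}
Here I interpret the ``pairwise output correlation $\rho$'' as $\langle g_b, g_{b'}\rangle = \rho\,\sigma^2$ for $b\neq b'$, where $\|g_b\|_2^2 = \sigma^2$ is a common value. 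This is the exchangeable setting, in which the $f_b$ are identically distributed given that they are trained on resampled data with i.i.d.\ randomization. The expansion then yields
\begin{align*}
\mathcal{E}(f_\mathrm{ens}) = \sigma^2\Bigl(\frac{1}{B} + \frac{B-1}{B}\rho\Bigr) = \Bigl(\rho + \frac{1-\rho}{B}\Bigr)\mathcal{E}(f_b),
\end{align*}
with $\mathcal{E}(f_b) = \frac{1}{B}\sum_b \mathcal{E}(f_b)$. The classical variance-of-an-average-of-correlated-variables formula thus gives the factor directly. I would apply the identical computation to the restricted models $f_b^{-j}$ around $f_\star^{-j}$, assuming the same correlation $\rho$ holds there. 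The common factor then multiplies $\mathcal{E}^{-j} - \mathcal{E}$, which closes the argument.

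\textbf{Main obstacle.} The real difficulty is making ``pairwise output correlation $\rho$'' precise so that the identity is exact. Two issues arise. First, the errors $g_b$ need not have mean zero, so one must decide whether $\rho$ is a correlation of the centered errors or of the raw inner products normalized by $\sigma^2$; the latter is what makes the algebra work cleanly. I would therefore define $\rho$ that way, interpreting the excess risks as expectations over training randomness if needed. Second, one must justify that the same $\rho$ and a common $\sigma^2$ hold for the restricted models. I would state both as modeling assumptions implicit in the proposition. If the models are only approximately exchangeable, I would fall back on an inequality version, noting that the remaining approximations are absorbed into the $O_P(n^{-1/2})$ term only if they are of that order.
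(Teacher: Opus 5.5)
Your argument is correct under the interpretation you state, but after the shared first step it takes a different route from the paper. Both start the same way, using Theorem~\ref{theorem:loco} to reduce the claim to a comparison of $\mathcal{E}^{-j}-\mathcal{E}$ for the ensemble and for the sub-models.

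The paper then splits $\mathcal{E}(\hat f_b)=\mathrm{Bias}(\hat f_b)^2+\mathrm{Var}(\hat f_b)$ over training randomness. It notes that the ensemble has the same bias as each identically distributed sub-model, and plugs in the textbook formula $\mathrm{Var}(\hat f_\mathrm{ens})=(\rho+\frac{1-\rho}{B})\mathrm{Var}(\hat f_b)$, with $\rho$ the usual centered correlation. You instead expand $\|f_\mathrm{ens}-f_\star\|_2^2$ directly and take $\rho$ to be the uncentered correlation of the errors $g_b$, so the bias is absorbed into $\rho$.

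Your route buys exactness. Done carefully, the paper's route rescales only the variance part:
$\Delta_\mathrm{ens}-\Delta_\mathrm{sub}=(\rho+\frac{1-\rho}{B}-1)\bigl[\mathrm{Var}(\hat f^{-j})-\mathrm{Var}(\hat f)\bigr]+O_P(n^{-1/2})$.
Its final identification of $\mathrm{Var}(\hat f^{-j})-\mathrm{Var}(\hat f)$ with $\Delta_\mathrm{sub}$ tacitly requires $\mathrm{Bias}(\hat f^{-j})^2=\mathrm{Bias}(\hat f)^2$, a step you never need. Your expansion is also an identity for the realized errors, whereas a bias--variance split exists only after averaging over training randomness.

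The paper's route buys interpretability and the standard meaning of $\rho$: it shows directly that ensembling leaves the bias untouched and shrinks only the variance. The two views are reconciled by setting $\tilde\rho=(\mathrm{Bias}^2+\rho\,\mathrm{Var})/(\mathrm{Bias}^2+\mathrm{Var})\ge\rho$, with expectations over training randomness integrated over $X$. Then $(\tilde\rho+\frac{1-\tilde\rho}{B})(\mathrm{Bias}^2+\mathrm{Var})=\mathrm{Bias}^2+(\rho+\frac{1-\rho}{B})\mathrm{Var}$.

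Keep in mind that requiring a common $\tilde\rho$ for the full and restricted models also constrains their bias-to-variance ratios. It is therefore a different hypothesis from the paper's common centered $\rho$ plus equal biases, not a weaker one. The two coincide when the sub-models are unbiased.
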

\begin{proof}[Proof sketch]
    We analyze the difference in importance estimation error by comparing the excess risks governing \autoref{eq:error_loco}.
    Applying the bias-variance decomposition $\mathcal{E}(f) = \mathrm{bias}(f)^2 + \mathrm{Var}(f)$, we note that by linearity of expectation, $\mathbb{E}[f_\mathrm{ens}] = \mathbb{E}[f_\mathrm{b}]$, implying equal squared biases for any sub-model $f_\mathrm{b}$: $\mathrm{bias}(f_\mathrm{ens})^2 = \mathrm{bias}(f_\mathrm{b})^2$.
    Thus, the bias terms cancel, leaving the difference to depend solely on variance.
    The final result follows by substituting the variance reduction for ensembling \cite{hastie2009elements}: $\text{Var}( f_\mathrm{ens}) = \left(\rho + \frac{1-\rho}{B}\right) \text{Var}(f_\mathrm{b})$.
    We conclude by recalling that $\mathcal{E}_\mathrm{sub}=\mathcal{E}_b$.
\end{proof}
Crucially, this implies that when model diversity increases (lower $\rho$), the benefits of the \textit{ensemble} strategy also increase.
This can practically be achieved through ensembling techniques such as \textit{bagging} or \textit{voting}.
\subsection{\sage}
\label{subsec:sage}

%
The above results, obtained for \textit{LOCO}, can be extended to the popular \sage framework \cite{covert2020understanding}. 
Unlike \textit{LOCO}, this approach is based on \textit{marginalization}.
For a subset of features $S$, instead of refitting a model on the complement $\bar S$, it marginalizes them out by averaging predictions, which gives the restricted model: $f^S(X_S) = \e{f(X)\mid X_S}$.
Similarly, we denote $\mathcal{E}^S$, the excess risk of this restricted model.
While theoretically grounded in the conditional expectation $\mathbb{E}[f(X) \mid X_S]$, which requires sampling $X_{\bar S}$ from the conditional distribution $p(X_{\bar S} \mid X_S)$ \citealt{covert2020understanding} establishes that exact conditional sampling is intractable.
Indeed, the above \textit{marginalization} has to be computed for every subset $S \subset [d]$, with $d$ the dimension, leading to a combinatorial complexity that is prohibitive for high-dimensional data. 
Accordingly, the standard formulation approximates the restricted model via marginal sampling, where features in the complement $\bar S$ are drawn from $p(X_{\bar S})$. 
This marginal sampling is typically implemented by permuting the feature values.
\sage then relies on summing value functions which are defined for a model $f$ and a subset of features $S$ as $v_f(S):=\e{\mathcal{L}(\e{f(X)}, y)}-\e{\mathcal{L}(f^S(X_{S}), y)}$.
The formula for computing \sage values is given by, 
\begin{align}
    \label{eq:sage}
    \psi_\star^\mathrm{sage}(j) =  \frac{1}{d}\sum_{S\subseteq [d] \backslash {j}} \binom{d-1}{|S|}^{-1}\left[v_{f_\star}(S\cup \{j\}) - v_{f_\star}(S)\right],
\end{align}
where the sum is computed over all possible subsets $S$ of features, excluding $j$. 
The extension of the previous analysis to \sage, however, requires additional assumptions.
\begin{assumption}[Support positivity]
    \label{assumption:support_positivity}
    The input data $X\in[0, 1]^p$ admits a density over $[0, 1]^p$ bounded from above and below by stricly positive constants. 
\end{assumption}
\begin{assumption}[Lipschitz continuity]
    \label{assumption:lipschitz}
    The loss is $K$-Lipschitz-continuous, $||\mathcal{L}(\hat{f}, Y) - \mathcal{L}(f, Y)||_2 \leq K ||\hat{f} - f||_{L^2}$
\end{assumption}
Assumption \ref{assumption:support_positivity}, which was introduced in \citet{benard2022mean}, is necessary to extend the \textit{loss consistency} from Assumption \ref{assumption:loss_continuity} to permuted inputs drawn from $p(X_{\bar S})$. 
However, this assumption is not required when considering the conditional version of \textit{SAGE}, where $X_{\bar S}$ is sampled from the conditional distribution $P(X_{\bar S}|X_{S})$. 
It should be noted that this is a strong assumption about the data distribution that limits the strength of feature dependencies, whereas Assumptions \ref{assumption:loss_continuity} and \ref{assumption:finite_variance} only place constraints on the loss.
In contrast, Assumption \ref{assumption:lipschitz} is a standard regularity condition also made in \citet{bottou2007tradeoffs}.

%
%

\begin{theorem}[Excess risk dominates estimation error with \textit{SAGE}] 
    \label{theorem:sage}
    Under Assumptions \ref{assumption:loss_continuity},  \ref{assumption:finite_variance}, \ref{assumption:support_positivity} and \ref{assumption:lipschitz}
    \begin{align}
    \label{eq:error_sage}
    \nonumber
        \psi^\mathrm{sage}_n(j) - \psi^\mathrm{sage}_\star(j)  &= \frac{1}{d}\sum_{S\subseteq D \backslash {j}} \binom{d-1}{|S|}^{-1}\left( \mathcal{E}^{S\cup j} -  \mathcal{E}^S \right) \\
        &+ O_P(n^{-1/2})
    \end{align}
\end{theorem}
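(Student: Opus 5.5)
The plan is to reduce \autoref{theorem:sage} to repeated applications of \autoref{prop:risk_diff}, one for each restricted model $f^S$, and then exploit the linearity of the Shapley weighting in \autoref{eq:sage}.

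First, I will expand the value function for the trained model. We have $v_{\both}(S) = \mathcal{R}_n(\e{\both(X)}) - \mathcal{R}_n(\both^S)$, where $\both^S$ is the marginalized restricted model obtained by averaging over permuted values of $X_{\bar S}$. The baseline term $\mathcal{R}_n(\e{\both(X)})$ appears in both $v(S\cup\{j\})$ and $v(S)$. It therefore cancels in every difference $v(S\cup\{j\}) - v(S)$, and likewise for $f_\star$. Each summand of the estimation error thus reduces to
\begin{align*}
\left[\mathcal{R}_n(\both^{S}) - \mathcal{R}(f_\star^{S})\right] - \left[\mathcal{R}_n(\both^{S\cup j}) - \mathcal{R}(f_\star^{S\cup j})\right].
\end{align*}
This is a difference of two "restricted Bayes risk estimation errors", exactly as in the \loco case of \autoref{theorem:loco}. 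I will track the signs so that the result matches the stated ordering $\mathcal{E}^{S\cup j}-\mathcal{E}^S$ under the paper's convention.

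Second, for each fixed $S$, I will apply the decomposition \autoref{eq:williamson_decomposition} with $\both^S$ in place of $\both$ and $f_\star^S$ in place of $f_\star$. Three terms need control.
\begin{itemize}
\item \textbf{The test-set term.} $\mathcal{R}_n(f_\star^S) - \mathcal{R}(f_\star^S) = O_P(n^{-1/2})$ by the CLT. This needs finite variance of $\mathcal{L}(f_\star^S(X_S),Y)$. I plan to get it from Assumption \ref{assumption:finite_variance} combined with the Lipschitz Assumption \ref{assumption:lipschitz}: the loss at $f_\star^S$ is controlled by the loss at $f_\star$ plus $K|f_\star^S - f_\star|$. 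The latter has finite second moment whenever $f_\star$ does, because conditional or marginal averaging contracts $L^2$ norms under the positive, bounded density.
\item \textbf{The excess risk.} The middle term is by definition $\mathcal{E}^S$.
\item \textbf{The remainder.} By Lemma 1 of \citet{kennedy2024semiparametric}, $r_n^S = O_P(\|\mathcal{L}(\both^S,Y)-\mathcal{L}(f_\star^S,Y)\|_2/\sqrt n)$. So it suffices to show that this $L^2$ norm is $o_P(1)$.
\end{itemize}

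The third bullet is the main obstacle, and it is where Assumptions \ref{assumption:support_positivity} and \ref{assumption:lipschitz} enter. By Lipschitz continuity, the norm is at most $K\|\both^S - f_\star^S\|_2$. By Jensen, averaging over $X_{\bar S}$ is an $L^2$ contraction, but only with respect to the product measure $p(X_S)\,p(X_{\bar S})$, not $P$. Here I will use support positivity: the density ratio between the product measure and the joint law is bounded by the ratio of the density bounds, giving $\|\both^S - f_\star^S\|_{2} \le C\|\both - f_\star\|_{2}$.

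The remaining gap is to obtain $\|\both - f_\star\|_2 = o_P(1)$, whereas Assumption \ref{assumption:loss_continuity} only gives consistency of the loss. I would first try to argue this directly for losses such as the MSE, where $\|\mathcal{L}(\both,Y)-\mathcal{L}(f_\star,Y)\|_2\to0$ forces $\both\to f_\star$ in $L^2$. If that fails in general, I would interpret Assumption \ref{assumption:loss_continuity} as applying to the marginalized models as well, which support positivity is meant to justify.

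Finally, I will sum over the finitely many subsets $S\subseteq[d]\setminus\{j\}$ with the Shapley weights. Since $d$ is fixed, a finite weighted sum of $O_P(n^{-1/2})$ terms remains $O_P(n^{-1/2})$, which yields \autoref{eq:error_sage}.
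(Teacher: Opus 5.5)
Your route is the paper's: apply \eqref{eq:williamson_decomposition} to each restricted model, use a CLT for the test-set term, control the remainder with Lemma~1 of \citet{kennedy2024semiparametric} plus Lipschitz continuity and support positivity, then take the finite Shapley-weighted sum. You are more careful than the paper in two places. You cancel the baseline $\mathcal{R}_n(\mathbb{E}[\both(X)])$ explicitly, and you justify finite variance of $\mathcal{L}(f_\star^S(X_S),Y)$, which Assumption~\ref{assumption:finite_variance} only provides at $f_\star$.

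\textbf{Sign of the conclusion.} Your promise to ``track the signs so that the result matches the stated ordering'' cannot be kept. Your displayed summand is correct. With $\mathcal{E}^S = \mathcal{R}(\both^S) - \mathcal{R}(f_\star^S)$, which is the definition the paper's proof itself uses in its step (7), that summand equals $\mathcal{E}^S - \mathcal{E}^{S\cup j} + O_P(n^{-1/2})$. This is the excess risk of the model without $j$ minus that of the model with $j$, the same orientation as $\mathcal{E}^{-j}-\mathcal{E}$ in \autoref{theorem:loco}. No convention consistent with the definitions of $v$ and $\mathcal{E}^S$ produces $\mathcal{E}^{S\cup j}-\mathcal{E}^S$. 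The paper reaches that form only because its step (3) writes $\hat v_f(S) - v_{f_\star}(S)$ as $\mathcal{R}_n(\hat f^S_\mathrm{cal}) - \mathcal{R}(f_\star^S)$, dropping the minus sign (and the baseline) from the definition of $v$. State the result with the corrected orientation rather than forcing agreement.

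\textbf{Consistency of the predictor.} The step $\|\both - f_\star\|_2 = o_P(1)$ is a genuine gap, and your fallback does not close it. Support positivity only lets you compare $L^2$ norms of functions of $X$ under $p(X_S)\,p(X_{\bar S})$ and under $P$. It cannot turn consistency of the loss into consistency of the predictor, and Assumption~\ref{assumption:loss_continuity} says nothing about the averaged model $\both^S$. The paper's proof makes exactly this unsupported move (``Assumption~\ref{assumption:loss_continuity} \dots\ extends to $\hat f^S_\mathrm{cal}$ thanks to Assumption~\ref{assumption:support_positivity}''). So either add $L^2$-consistency of $\both$ as an assumption, or prove it for your loss. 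For the MSE with $f_\star = \mathbb{E}[Y\mid X]$, your first idea works. Set $\Delta = \both(X) - f_\star(X)$ and $\epsilon = Y - f_\star(X)$; the loss difference is $\Delta^2 - 2\epsilon\Delta$. Then $\mathbb{E}[(\Delta^2 - 2\epsilon\Delta)^2\mid X] = \Delta^4 + 4\Delta^2\,\mathbb{E}[\epsilon^2\mid X] \ge \Delta^4$, so $\|\Delta\|_2 \le \|\Delta\|_4 \le \|\mathcal{L}(\both(X),Y) - \mathcal{L}(f_\star(X),Y)\|_2^{1/2} = o_P(1)$.

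\textbf{Monte Carlo marginalization.} Your $\both^S$ is the exact marginal average, whereas the estimator averages over $n_\mathrm{cal}$ permuted rows. The paper isolates this extra error as its term $B^1$, together with a matching split of the remainder, and bounds it by Lipschitz continuity and a CLT with $n_\mathrm{cal}$ of order $n$. You need that term too, because your Jensen/density-ratio bound controls the population marginalization, not the computed one.
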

Similar to \loco, the error in estimating \sage importance depends on the error in estimating the restricted models. However, these terms are summed over all possible feature subsets for \sage. 
These results for \loco and \sage highlight the role of the excess risk when estimating feature importance. 
This aspect has been overlooked in previous literature, which made strong assumptions about the convergence rate of ML models \cite{williamson2023general}. 
This in particular reveals a trade-off that motivates using ensembling at the model level, contrary to previous work that suggests aggregating importances of individual models \cite{donnelly2023rashomon}. 

\subsection{\cfi}
\label{subsec:cfi}
We extend our analysis to Conditional Feature Importance (\textit{CFI}), defined as a risk difference under input \textit{perturbation}, $\psi_\star^\mathrm{cfi} (j)= \mathcal{R}(f_\star(X^{\pi(j|-j)})) - \mathcal{R}(f_\star(X))$, where the $j^{th}$ feature in $X^{\pi(j|-j)}$ is sampled from the conditional distribution $P(X^j|X^{-j})$.
Unlike LOCO or SAGE, this method applies the same model, $f_\star$ to both terms.
Consequently, during the analysis of the estimation error, the quadratic excess risk terms cancel out. 
This leads to an error dependence on the model estimation that is linear, taking the form $O_P(\mathbb{E}[\both-f_\star])$.
The linear dependence on the model error renders the two ensembling strategies theoretically equivalent for this term. 
This contrasts with \loco and \textit{SAGE}, where the excess risk is quadratic and can be directly reduced by model-level ensembling.
Thus, improved predictive accuracy here primarily mitigates the second-order stochastic remainder (consistent with Remark \ref{remark:risk_diff}). 
Detailed analysis and empirical results are provided in \autoref{section:cfi}.

\subsection{PFI}
\label{subsec:pfi}
Finally, we study Permutation Feature Importance (\textit{PFI}), which corresponds to a risk difference under simple permutation, $\psi_\star^\mathrm{pfi} (j)= \mathcal{R}(f_\star(X^{\pi(j)})) - \mathcal{R}(f_\star(X))$, where the $j^{th}$ feature in $X^{\pi(j)}$ is drawn from the marginal distribution $P(X^j)$ via permutation across samples. 
The outcome of the analysis is very similar to \cfi but requires an additional assumption, support positivity (\ref{assumption:support_positivity}), similarly to \textit{SAGE}.
Indeed, as with \textit{CFI}, the same model $f^\star$ is applied to both $X^{\pi(j)}$ and $X$. 
As a result, for a model $f_\theta$, the excess risk terms $\mathcal{E}(f_\theta)$ that appear on both sides of the difference computed by \textit{PFI} are the same. 
Consequently, they cancel out, leaving a linear dependence $O_P(\mathbb{E}[f_\theta - f^\star])$, similarly to \textit{CFI}.
However, unlike \textit{CFI}, which preserves the data distribution through conditional sampling, \textit{PFI} relies on marginal permutations that may produce inputs $X^{\pi(j)}$ outside the joint distribution's support. 
Consequently, the loss consistency of Assumption \ref{assumption:loss_continuity} does not automatically extend to these perturbed inputs. 
To ensure the model remains a valid risk minimizer on the permuted data, \textit{PFI} requires the additional support positivity condition of Assumption \ref{assumption:support_positivity}, similar to the requirements for \textit{SAGE}.
Under this additional assumption, PFI behaves like CFI.
These results are supported by empirical evidence, provided in \autoref{section:pfi}.

\section{Experiments}
\label{section:experiments}

The code to reproduce all experiments is publicly available.\footnote{\url{https://github.com/jpaillard/ensemble_vim}}
\paragraph{Models}
Our experiments focus on two model classes: Multi-Layer Perceptrons (MLP) and tree-based models. 
The MLP architecture consists of three hidden layers (64, 32, and 8 neurons).
With more than 3,500 trainable parameters, it allows studying the overparametrized regime.
Training is performed using the Adam optimiser with early stopping (patience of 10 epochs) for up to 500 epochs.
The voting ensemble is obtained by randomising the weight values, using the \citealt{glorot2010understanding} initialisation. 
For tree-based models, we use a decision tree regressor from \textit{scikit-learn} with the squared-error splitting criterion and no limit on the maximum depth. 
Algorithmic stochasticity arises from the random set of features used to identify the optimal split at each node.
We refer to the ensemble as a Random Forest (RF). 
For both RF and MLP, bagging ensembles are generated by training individual models on bootstrap samples of the data.

\paragraph{Datasets} The datasets cover diverse non-linear relationships and varying levels of feature interactions. 
For all datasets, the number of features is fixed at $d=20$. 
When the data-generating process involves fewer variables, we add spurious features unrelated to the outcome.
The first dataset is \textit{Friedman 1} \cite{friedman1991multivariate}, where each feature follows a uniform distribution $X_j\sim \mathcal{U}(0,1)$ and the response is defined by $Y=10\cdot\sin(\pi \cdot X_0\cdot X_1) + 20\cdot (X_2 -0.5)^2 + 10\cdot X_3 + 5\cdot X_4$.
The second is the \textit{G-function} \cite{saltelli2004sensitivity}. 
The data-generating process is given by $Y=\prod_j (|4X_j -2| + a_j)/(1+a_j)$ with $a_j=j$ for $j\leq 5$ and $a_j=100$ otherwise.
Values of $a_j$ greater than 10 correspond to virtually insignificant features. 
In addition to the strong interactions of features in the generation of the response, the features have a correlation of $\rho=0.3$.
Lastly, we used the \textit{Ishigami} function \cite{saltelli2004sensitivity}, $Y=\sin(X_0) + 7\cdot\sin(X_1)^2 + 0.1\cdot X_2^4\cdot \sin(X_0)$. 
Here, the features are sampled from $\mathcal{U}(-\pi, \pi)$ with a correlation of $\rho=0.3$.
Gaussian noise is added to each response to achieve a signal-to-noise ratio of one. 
Notably, for all three datasets in the absence of noise and with large sample sizes (e.g., $n=3\times 10^4$), both RF and MLP achieve an $R^2$ score almost equal to 1. 
This allows us to focus on the estimation and optimization errors effectively mitigated by ensembling while ignoring the approximation error  (see \autoref{fig:concept_error}).

The first results on the \textit{Friedman 1} dataset, presented in \autoref{fig:friedman_comparison}, demonstrate that for both \loco and \textit{SAGE}, ensembling at the model level consistently improves feature importance estimation across both RF and MLP architectures. 
As illustrated in the top row of \autoref{fig:friedman_comparison}, this strategy leads to a lower MSE, where the ground truth is defined as the asymptotic importance value estimated using $n=10^5$. 
Reducing this metric is critical for feature ranking, as larger estimation errors can lead to incorrect orderings.
Moreover, model-level ensembling improves classification metrics for identifying relevant features, yielding higher ROC AUC scores. 
This advantage holds whether the classification target is the Markov blanket (\textit{LOCO}) or the set of features functionally linked to the outcome (\textit{SAGE}). 
The ground-truth binary labels designate any feature with non-zero asymptotic importance as relevant.
This empirical finding is particularly significant because it demonstrates that reductions in estimation error effectively translate into improved selection performance.
This outcome is not trivial, as lower MSE does not inherently guarantee better feature classification.
\begin{figure}[h!]
    \centering
    \includegraphics[width=1\linewidth]{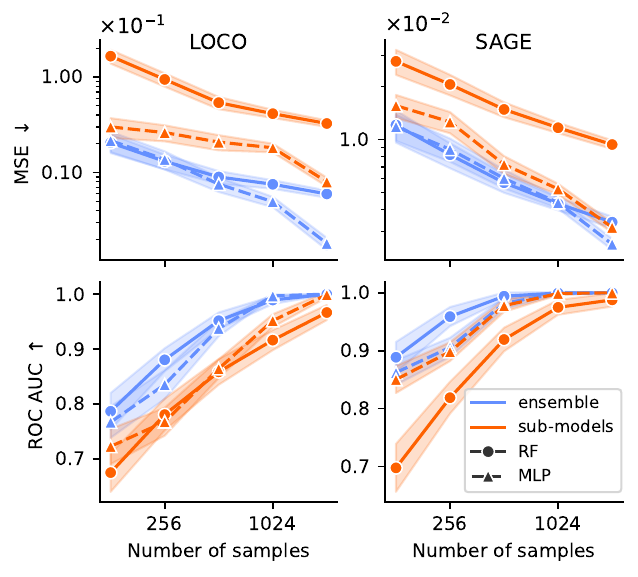}
    \caption{ 
        \textbf{Model-level ensembling reduces feature importance estimation error and improves selection.} 
        Importance is measured directly on the bagging ensemble (blue) versus averaging individual sub-model importances (orange) on Friedman 1. 
        Model-level ensembling yields both lower MSE, $(\psi_n-\psi_\star)^2$ and higher ROC AUC, indicating more accurate variable ranking and superior feature selection.
        These gains hold for both \loco and \sage across Random Forest (solid) and MLP (dashed) architectures.
        Error bars represent one standard deviation over 100 random seeds.
    }
    \label{fig:friedman_comparison}
\end{figure}
Figures \ref{fig:supplement_mse_loco}-\ref{fig:supplement_auc_cfi} in appendix provide extended results on these performance gains across the full suite of benchmarks, isolating the sources of estimation error: sampling instability versus algorithmic stochasticity.
For \loco and \textit{SAGE}, we observe that the performance gap between the ensemble and sub-model strategies is more pronounced under bagging than voting. 
When focusing on algorithmic stochasticity via voting, the gap is notably larger for MLPs than for Random Forests. 
This distinction reflects neural networks' greater sensitivity to weight initialization compared to the randomized feature selection used in tree splitting. 
Crucially, model-level ensembling reduces estimation error for both support and null features; the former is essential for accurately ranking important variables, while the latter matters for limiting false discoveries in high-dimensional settings.
By contrast, and consistent with our theoretical findings, \cfi shows negligible differences between strategies.

The second experiment provides a deeper understanding of the performance gains from model-level ensembling, consistent with the theoretical discussion in Remark \ref{remark:risk_diff}. 
We present in \autoref{fig:bias_variance} a decomposition of the MSE into squared bias and variance across varying sample sizes, estimated over 100 random data draws from the Friedman 1 dataset. 
This decomposition reveals the role of the bias term in the total error, which is effectively reduced by model-level ensembling. 
Consequently, while the sub-models strategy can be more efficient at reducing variance, it remains hampered by significant bias, leading to a higher overall MSE.
\begin{figure}[h!]
    \centering
    \includegraphics[width=1\linewidth]{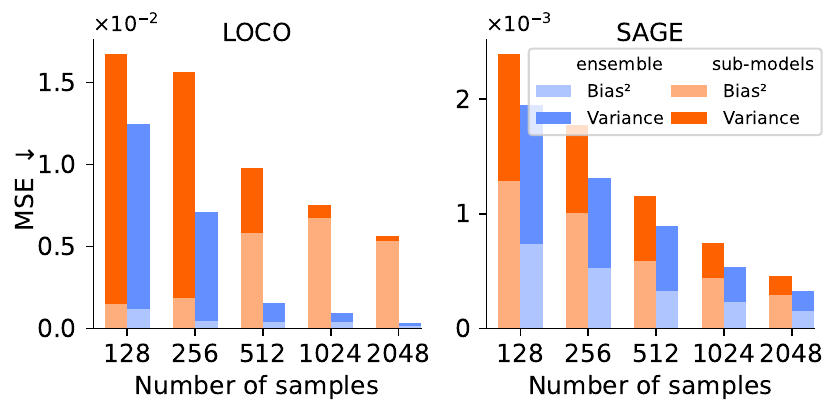}
    \caption{ 
        \textbf{Bias-variance decomposition of feature importance estimation error.}
        The bar plots show the contributions of the squared bias (dark shades) and the variance (light shades) to the MSE for both the ensemble (blue) and sub-models (orange) strategies. 
        Estimation error corresponds to the MSE, $(\psi_n-\psi_\star)^2$ (Equaition \ref{eq:error_loco} and \ref{eq:error_sage}).
        These results were obtained using an MLP model on the Friedman 1 dataset.
    }
    \label{fig:bias_variance}
\end{figure}
The benchmark presented in \autoref{fig:benchmark} demonstrates that across all three datasets, the \textit{ensemble} strategy consistently reduces MSE and increases the ROC AUC for feature selection. 
In this experiment, the sample size is fixed at $n=512$, and we focus on \loco importance. The improvements in MSE and ROC AUC can be attributed to the predictive performance gains obtained by ensembling, as reflected by the systematically higher $R^2$ scores reported in the first column. 
This observation aligns with the theoretical result in \autoref{theorem:loco}, which explicitly links the reduction of excess risk to a reduction in feature importance estimation error.
%

These findings generalize beyond the $d=20$ benchmarks presented here: \autoref{section:highdim} extends the comparison to a high-dimensional setting with $d=100$ features and an overparameterized MLP, where the ensemble strategy maintains its advantage in both importance MSE and support recovery. 
\autoref{section:tabicl} further replicates this benchmark using TabICL \cite{qu2025tabicl}, a tabular foundation model, showing consistent improvements in importance MSE.
Beyond MSE and ROC AUC, \autoref{section:stability} evaluates ranking stability via Spearman's rank correlation across cross-validation folds, confirming that the ensemble strategy also produces more consistent importance rankings.
We further validate these findings on a breast cancer gene expression dataset with established ground-truth driver genes, where the ensemble strategy achieves higher recovery of known cancer drivers (\autoref{section:brca}).

%
\begin{figure}[h!]
    \centering
    \includegraphics[width=1\linewidth]{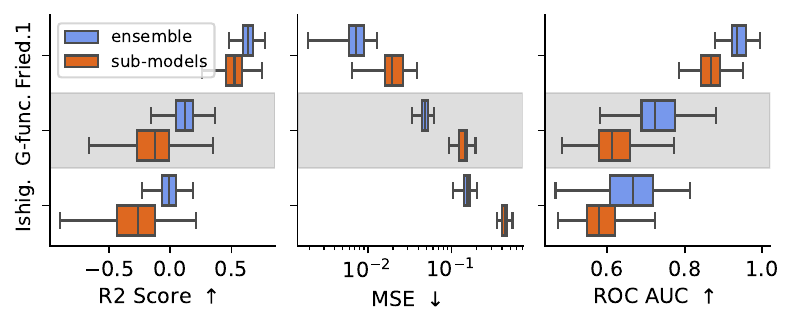}
    \caption{ 
        \textbf{Ensemble-level importance improves estimation across diverse datasets.}
        Measuring \loco importance directly on the bagging ensemble (blue) consistently outperforms the average of sub-model scores (orange). 
        Higher $R^2$ scores demonstrate the ensemble's superior predictive performance. 
        This improved prediction translates into more accurate numerical estimates for ranking (lower MSE) and more reliable feature selection (higher ROC AUC). 
        These gains are robust across the Friedman 1, G-function, and Ishigami datasets (rows of the plots), which present varying non-linearities and levels of feature interactions. 
        For all datasets, the number of samples was set to n=512 and the estimator is a MLP.
        Box plots represent results across 100 random seeds.
    }
    \label{fig:benchmark}
\end{figure}
Lastly, we demonstrate the clinical utility of our framework through a real-world application: identifying the proteomic signatures associated with Body Mass Index (BMI) using the UK Biobank (UKBB, \citealt{sudlow2015uk}). 
While BMI is a standard proxy for obesity—a primary risk factor for type 2 diabetes and cardiovascular disease—it is a crude measure failing to account for body composition.
Its correlation with body fat holds primarily at the population level, often misclassifying individuals with high muscle-to-fat ratios (e.g., athletes) as obese. 
Furthermore, BMI frequently fails to capture meaningful metabolic improvements following lifestyle interventions, such as changes in body composition or the loss of visceral fat, when total body weight remains stable \cite{prentice2001beyond}. 
In contrast, blood-measured proteins can provide a more granular reflection of an individual’s internal physiological state, capturing heterogeneous metabolic phenotypes and dietary patterns that anthropometric metrics overlook \cite{watanabe2023multiomic}.
We predicted BMI from a panel of $2,922$ plasma proteins measured via the Olink platform in a cohort of $n=46,382$ UKBB participants.  
The prediction pipeline consisted of a feature selection step to retain the 50 variables with the strongest univariate linear associations. 
This was followed by an ensemble of 10 \textit{LightGBM} models, with varying hyperparameters (learning rate ranging from 0.01 to 1 and max depth ranging from 5 to 30), each trained on a bootstrap sample of the data.
The model was implemented using the \textit{scikit-learn} library \cite{pedregosa2011scikit}. 
The ensemble achieved an $R^2$ score of $0.62 \pm 0.001$ via 5-fold cross-validation, which outperforms individual models (mean $R^2 = 0.54$) and is comparable to recent multiomic BMI studies on independent cohorts \cite{watanabe2023multiomic}. 
\autoref{fig:benchmark_ukbb} presents importance scores measured with \textit{LOCO}, a measure adapted to the inherent multicollinearity of such a large proteomic panel. 
Additional results for \sage and \cfi are reported in \autoref{section:ubkk_sage}.
%
%
%

The feature importance rankings revealed practical differences between ensembling strategies across the UKBB cohort. 
Both strategies identified established metabolic markers, including FABP4 (Fatty Acid Binding Protein 4), which is strongly associated with adiposity~\cite{watanabe2023multiomic}, LEP (Leptin), an adipocyte-derived hormone regulating long-term energy balance and satiety, and ADM (Adrenomedullin), an inhibitor of endothelial insulin signaling that increases in concentration during obesity~\cite{cho2025endothelial}.
Crucially, model-level ensembling improved the estimation of additional proteins with clear biological underpinnings. 
These include IGFBP-1 and -2 (Insulin-like Growth Factor Binding Proteins), which are inversely related to insulin levels and fat mass.
Notably, IGFBP-2 is recognized for its specific role in protecting against the development of insulin resistance and obesity \cite{wheatcroft2007igf}. 
In contrast, the importance estimated with \textit{sub-models} showed error bars (one standard deviation across folds) that largely overlapped with 0.

\begin{figure}[h!]
    \centering
    \includegraphics[width=1\linewidth]{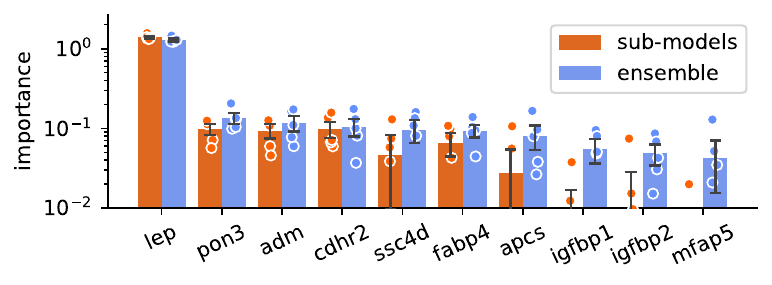}
    \caption{ 
         \textbf{Identification of proteomic signatures for BMI in the UK Biobank.}
         Feature importance ranking of the top 10 proteins identified with \loco for the prediction of BMI from 2,922 proteins measured in plasma using the Olink platform ($n = 46,382$ participants). 
         The predictive model was an ensemble comprising 10 \textit{LightGBM} models. 
         Error bars indicate one standard deviation estimated via 5-fold cross-validation. 
         For each fold, the ensemble importance score and the mean importance across individual models are represented by blue and orange circles, respectively.
         }
    \label{fig:benchmark_ukbb}
\end{figure}
\section{Discussion}
%
Our analysis shifts the perspective on improving feature importance accuracy. 
Under assumptions reflecting the slow convergence rates of state-of-the-art models, we find that the model's estimation error is the main driver of importance inaccuracy. 
This suggests clear guidelines that were previously overlooked: ensembling at the model level improves over aggregating individual models' importances. 
We empirically validate these theoretical findings across diverse benchmarks and real-world data.
These results also explain the poor performance of \loco when used without ensembling, which was documented in the recent literature \cite{paillard2024measuring}. 
%
Importantly, because importance methods have distinct mathematical formulations and rely on different assumptions, there is no single general theorem that covers all methods; each requires a specific theoretical treatment.
Our analysis reveals that the benefits of model-level ensembling depend on the method. 
For risk-difference methods such as \loco and \sage, the importance estimation error depends quadratically on the model's excess risk, leading to substantial gains from ensembling. 
In contrast, for \cfi and \textit{PFI}, the same model is evaluated on both original and perturbed inputs, so that excess risk terms cancel, leaving only a linear dependence. 
While model-level ensembling empirically consistently improves importance estimation across all methods, the magnitude of these gains depends heavily on the importance measure.
This nuance would not emerge without formal analysis.

%
\textbf{Limitations and future work}
We focused our analysis on methods targeting well-defined theoretical indices to enable rigorous error decomposition. 
This scope limits our work by excluding heuristic methods (e.g., permutation importance, gradient-based attribution) that lack population-level estimands, and it restricts empirical validation to tabular data. 
Unlike tabular features, images or time series lack the semantic consistency required to define valid population-level indices without prior representation learning.

\paragraph{Acknowledgements}
This research has received funding from the H2020 Research Infrastructures Grant EBRAIN-Health 101058516 and the VITE ANR23-CE23-0016 and PEPR Sante numérique, Brain health Trajectories ANR-22-PESN-0012 projects.
This research has been conducted using the UK Biobank Resource under Application Number 44257.

\clearpage
\newpage
\section*{Impact Statement}

While motivated by scientific discovery in biomedical research, improving the reliability of explanation methods is also critical for the regulation of high-risk Artificial Intelligence (AI). 
Reliable explanations can play a pivotal role in globally emerging legislative frameworks; for instance, Article 86 of the EU AI Act explicitly grants individuals a ``Right to Explanation" regarding automated decisions.
Ensuring the robustness of explanations is therefore a necessary step to facilitate the implementation of such regulation.


\bibliography{biblio}
\bibliographystyle{icml2026}

\newpage
\appendix
\onecolumn

\section{Proofs}
\label{section:proofs}

\subsection{Proof of Proposition \ref{prop:risk_diff}}
\label{proof:prop_risk}

\paragraph{Assumption: \textit{Loss consistency}}
    \both is a minimizer of the risk:
    $||\mathcal{L}(\both(X), Y) - \mathcal{L}(f_\star(X), Y)||_{2}  = o_P(1)$
\paragraph{Assumption: \textit{Finite variance of the loss}}
    The loss function, evaluated at the risk minimizer $f_\star$, has a finite variance with respect to the data distribution:\\
    $\mathrm{Var}\left(\mathcal{L}(f_\star(X), Y)\right) < \infty$

\paragraph{Proposition: Bayes risk estimation error}
    Under Assumptions \ref{assumption:loss_continuity} and \ref{assumption:finite_variance}, we have that
    \begin{align}
        \mathcal{R}_n\left(\both\right) - \mathcal{R}(f_\star) = \mathcal{E} + O_P(n^{-1/2}).
    \end{align}

\begin{proof}
    \textbf{(1) decomposition} \\
    We start by using a decomposition similar to \autoref{eq:williamson_decomposition}, leading to: 
    \begin{align*}
         \mathcal{R}_n\left(\both\right) - \mathcal{R}(f_\star) =  \underbrace{\left[\mathcal{R}_n(f_\star) - \mathcal{R}(f_\star) \right]}_{A} - \underbrace{\left[\mathcal{R}(\both) - \mathcal{R}(f_\star)\right]}_{\mathcal{E}} + r_n
    \end{align*}
    We will also treat separately the terms $A, B, r_n$. \\ \\ 
    
    \textbf{(2) term $A$} \\
    We start by studying $A=\mathcal{R}_n(f_\star) - \mathcal{R}(f_\star)$. 
    It is important to note that $f_\star$ is fixed and does not depend on the training data. 
    We will therefore directly apply the central limit theorem, using that 
    \begin{align*}
        \mathcal{R}_n(f_\star) - \mathcal{R}(f_\star) = \frac{1}{n}\sum_{i=1}^n \left( \mathcal{L}(f_\star(X_i), Y_i) - \mathbb{E}\left[\mathcal{L}(f_\star(X), Y)\right] \right)
    \end{align*}
    $(X_i, Y_i)$ being independent and identically distributed. 
    Then the assumption \ref{assumption:finite_variance}, on the \textit{finite variance of the loss}, that we denote $\sigma^2 = \mathrm{Var}\left(\mathcal{L}(f_\star(X), Y)\right) < \infty$, allows us to use the central limit theorem, which gives $\sqrt{n}\left(\mathcal{R}_n(f_\star) - \mathcal{R}(f_\star)\right) \rightarrow \mathcal{N}(0, \sigma^2)$.
    %
    
    \textbf{Note}: Using the assumption that the loss $\mathcal{L}$ is bounded by $l_\infty$, Hoeffding's inequality could also be used to bound the risk differences that with probability greater than $1-\delta$, 
    \begin{align}
        \mathcal{R}_n(f_\star) - \mathcal{R}(f_\star) \leq \frac{l_{\infty}}{\sqrt{2n}}\sqrt{\log\frac{1}{\delta}},
    \end{align} \\ 
    
    \textbf{(3) term $r_n$} \\
    
    This proof for this term relies on \textit{Lemma 1} from \citealt{kennedy2024semiparametric}, which state that for a function $\both$ fitted on a separate train set, 
    \begin{align*}
        r_n &= \left[\mathcal{R}_n(\both) - \mathcal{R}(\both) \right] - \left[\mathcal{R}_n(f_\star) - \mathcal{R}(f_\star)\right] \\
        &= O_P\left(\frac{||\mathcal{L}(\both(X), Y) - \mathcal{L}(f_\star(X), Y)||_{\mathcal{F}} }{\sqrt{n}}\right)
    \end{align*}
    The proof of this lemma relies on showing that $r_n$ is zero-mean and has a variance bounded by $||\mathcal{L}(\both, Y) - \mathcal{L}(f_\star, Y)||_{\mathcal{F}}^2/n$, which then allows using Chebyshev’s inequality.
    Then, using assumption \ref{assumption:loss_continuity} on the \textit{loss consistency}, we have that $||\mathcal{L}(\both(X), Y) - \mathcal{L}(f_\star(X), Y)||_{\mathcal{F}} = o_P(1)$ and consequently $r_n=o_P(n^{-1/2})$

    \textbf{(4) conclusion} \\
    The remaining term $B=\mathcal{R}(\both) - \mathcal{R}(f_\star)$ corresponds to the excess risk, that we denote $\mathcal{E}$.
    Combining all three terms gives
    \begin{align*}
        \mathcal{R}_n\left(\both\right) - \mathcal{R}(f_\star) = \mathcal{E} + O_P(n^{-1/2}),
    \end{align*}
    which completes the proof.
\end{proof}

\subsection{Proof of \autoref{theorem:loco}}
\paragraph{Theorem: Excess risk dominates importance estimation with \loco}
    Denoting $\mathcal{E}^{-j}=\mathcal{R}(\both^{-j}) - \mathcal{R}(f_\star^{-j})$ the excess risk in estimating $f_\star^{-j}$. 
    The error in estimating the importance of a feature $j$ can be decomposed as follow,
    \begin{align}
    \nonumber
       \psi^\mathrm{loco}_n(j) - \psi^\mathrm{loco}_\star(j)  &= \mathcal{E}^{-j} -  \mathcal{E} + O_P(n^{-1/2})
    \end{align}

\paragraph{(1) Decomposition} We start by writting down the expression of the feature importance estimate for the feature $j$, $\psi^\mathrm{loco}_{n}(j)$ and its target quantity, $\psi^\mathrm{loco}_\star(j)$:
\begin{align}
    \psi^\mathrm{loco}_{n}(j) - \psi^\mathrm{loco}_\star(j) &= \mathcal{R}_n\left(\both^{-j}\right) - \mathcal{R}_n\left(\both\right) - \mathcal{R}\left(f_\star^{-j}\right) + \mathcal{R}(f_\star)
\end{align}

\paragraph{(2) Application of \hyperref[prop:risk_diff]{Proposition \ref{prop:risk_diff}}}

We can then apply \hyperref[prop:risk_diff]{Proposition \ref{prop:risk_diff}}, which gives $\mathcal{R}_n\left(\both^{-j}\right) -  \mathcal{R}\left(f_{-j}\right) = \mathcal{E}^{-j} + O_P(n^{-1/2})$.
Where $ \mathcal{E}^{-j} = \mathcal{R}\left(\both^{-j}\right) - \mathcal{R}(f_{-j})$ is the excess risk of the model $\both^{-j}$.
To the same extent, we have $\mathcal{R}_n\left(\both\right) - \mathcal{R}(f_\star) = \mathcal{E} + O_P(n^{-1/2})$.
And consequently,
\begin{align*}
    \psi^\mathrm{loco}_{n}(j) - \psi^\mathrm{loco}_\star(j) =\mathcal{E}^{-j} - \mathcal{E} + O_P(n^{-1/2})
\end{align*}
which completes the proof. 

\subsection{Proof of  \hyperref[prop:bagging]{Proposition \ref{prop:bagging}}}

\paragraph{Impact of bagging on importance estimation}
    Consider an ensemble $f_\mathrm{ens}$ of $B$ models with pairwise output correlation $\rho$. 
    For any feature, let $\psi_\star$ be the true importance, and denote the estimated importances as $\psi_\mathrm{ens} = \psi (f_\mathrm{ens})$ and $\psi_\mathrm{sub}=\frac{1}{B}\sum_{b=1}^B\psi(f_b)$. 
    Then, we have:
    \begin{align}
    \nonumber
        \psi_\mathrm{ens}-\psi_\star=(\psi_\mathrm{sub}-\psi_\star) \left(\rho + \frac{1-\rho}{B}\right) + O_P(n^{-1/2})
    \end{align}
\paragraph{(1) Notations}
We denote the error in estimating the importance of feature $j$, by using the model-level ensembling. 
We explicit the dependence of the feature importance estimate on the model through its first argument, 
\begin{align*}
    \Delta_\mathrm{ens} &= \psi^\mathrm{loco}_{n}(\hat f_\mathrm{ens}, j) - \psi^\mathrm{loco}_\star(f_\star, j) \\
    &= \mathcal{E}(\hat f^{-j}_\mathrm{ens}) - \mathcal{E}(\hat f_\mathrm{ens})  + O_P(n^{-1/2})
\end{align*}
Where $\hat f_\mathrm{ens}$ is the notation used for the estimated ensemble model, such that $\hat f_\mathrm{ens}(\cdot) = \frac{1}{B}\sum_{b=1}^B \hat f_b(\cdot)$. 
Here the notation $\hat f_b$ indicates the randomization of the individual models $f_1, \cdots, f_B$ with regard to the bootstrap sample of the training data.
Then we denote
\begin{align*}
    \Delta_{sub} &= \frac{1}{B}\sum_{b=1}^B \psi^\mathrm{loco}_{n}(\hat f_{b}, j) - \psi^\mathrm{loco}_\star(f_\star, j) \\
    &= \frac{1}{B}\sum_{b=1}^B  \left(\mathcal{E}(\hat f^{-j}_{b}) - \mathcal{E}(\hat f_{b}) \right) + O_P(n^{-1/2})
\end{align*}
\paragraph{(2) bias-variance decomposition}
Then, using the bias-variance decomposition of the error, we have $\mathcal{E}(\hat f_{b}) = \text{Bias}(\hat f_b)^2 + \text{Var}(\hat f_{b})$. 
Since each sub-model is trained on a bootstrap sample of the training data, they are identically distributed and share the same bias term, 
$\text{Bias}(\hat f_\mathrm{ens}) = \text{Bias}(\hat f_{b})$.
On the contrary, bagging reduces the variance.
Asssuming that the models have a correlation of $\rho$, we have that $\text{Var}(\hat f_\mathrm{ens}) = \left(\rho + \frac{1-\rho}{B}\right) \text{Var}(\hat f)$ \cite{hastie2009elements}. 
Where $\text{Var}(\hat f)$ denotes the common variance of any individual model $\hat f_b$.

\paragraph{(3) Conclusion}
Assembling the two previous points together, we have, 
\begin{align*}
    \Delta_\mathrm{ens} &= \mathcal{E}(\hat f^{-j}_\mathrm{ens}) + \mathcal{E}(\hat f_\mathrm{ens})+ O_P(n^{-1/2}) \\
    &=\left(\rho + \frac{1-\rho}{B}\right) \left(\text{Var}(\hat f^{-j}) - \text{Var}(\hat f)\right) + O_P(n^{-1/2})\\
    &= \Delta_{sub}\cdot\left(1 + \frac{1-\rho}{B}\right)  + O_P(n^{-1/2})
\end{align*}

\subsection{Proof of \autoref{theorem:sage}}

\paragraph{Support positivity}
    The input data $X\in[0, 1]^p$ admits a density over $[0, 1]^p$ bounded from above and below by stricly positive constants. 

\paragraph{Lipschitz continuity}
    The loss is $K$-Lipschitz-continuous, $||\mathcal{L}(\hat{f}, Y) - \mathcal{L}(f, Y)||_2 \leq K ||\hat{f} - f||_2$

\paragraph{Proposition: Excess risk dominates importance estimation with \sage}
    Under Assumptions \ref{assumption:loss_continuity},  \ref{assumption:finite_variance} and \ref{assumption:support_positivity},
    \begin{align}
    \nonumber
        \psi^\mathrm{sage}_n(j) - \psi^\mathrm{sage}_\star(j)  &= \frac{1}{d}\sum_{S\subseteq D \backslash {j}} \binom{d-1}{|S|}^{-1}\left( \mathcal{E}^{S\cup j} -  \mathcal{E}^S \right) \\
        &+ O_P(n^{-1/2})
    \end{align}

\paragraph{(1) Notations}
We recall that the restricted model is given by $f_\star^S(X^S) = \e{f_\star\left(X\right)\mid X^S}$ and that instead of sampling variables of the complement $\bar S$ from the conditional distribution, they are sampled from the marginal distribution $p(X^{\bar S})=\prod_{j\in \bar S}p(X^j)$ which is achieved by randomly permuting $X^{\bar S}$

\begin{align*}
    v_{f_\star}(S):=\e{\mathcal{L}(\e{f_\star(X)}, Y)}-\e{\mathcal{L}(f^S_\star(X^S), Y)}.
\end{align*}


To assign the importance of a feature $X^j$ with respect to a subset of other features $S\subset [d]$, \sage compares the combinations of value functions across all possible sets. 
For clarity, we will omit the upper script in this proof, denoting $\psi(j) = \psi^\mathrm{sage}(j)$ as it is unambiguous in this context. 
%



\paragraph{(2) SAGE estimation error.}
The estimation error of SAGE satisfies
\begin{align*}
\psi_j^\ast - \hat{\psi}_n(j)
&= \frac{1}{d}
\sum_{S \subseteq D\setminus\{j\}}
\binom{d-1}{|S|}^{-1}
\Big[
v_{f_\star}(S\cup\{j\}) - v_{f_\star}(S) 
- \big(\hat{v}_f(S\cup\{j\}) - \hat{v}_f(S)\big)
\Big].
\end{align*}

Thus, we need to study $\hat{v}_f(S) - v_{f_\star}(S)$. 
Note that the marginalization in $f_\star^S(X^S)$ is done in practice by averaging across $n_\mathrm{cal}$ permutations.
To take into account this additional source of approximation, we use the notation $\hat f^S_\mathrm{cal}(X_i) = \frac{1}{n_{\text{cal}}}\sum_{\ell}\hat f(X_i^S, X^{\bar S}_\ell)$, were $X^{\bar S}_\ell$ corresponds to a permutation of the features in $\bar S$ whereas the features in $S$, denoted $X_i^S$ are unchanged. 

\paragraph{(3) Decomposition.}
We write
\begin{align*}
\hat{v}_f(S) - v_{f_\star}(S)
&= \frac{1}{n_{\text{test}}}
\sum_{i}
\mathcal{L}\!\left(
\hat f^S_\mathrm{cal}(X_i),\, Y_i
\right)
- \mathbb{E}\!\left[
\mathcal{L}\!\left(
f_\star^S(X^S),\, Y
\right)
\right]\\
&= \frac{1}{n_{\text{test}}}
\sum_i
\Big(
\mathcal{L}\!\left(
f_\star^S(X_i^S),\, Y_i
\right)
- \mathbb{E}\!\left[
\mathcal{L}\!\left(
f_\star^S(X^S),\, Y
\right)
\right]
\Big)\\
&\qquad+\mathbb{E}\!\left[
\mathcal{L}\!\left(
\hat f^S_\mathrm{cal}(X),\, Y
\right)
\right]
- \mathbb{E}\!\left[
\mathcal{L}\!\left(
f_\star^S(X^S),\, Y
\right)
\right] + r_n \\
&= A + B + r_n .
\end{align*}

\paragraph{(4) Term $A$.}
The assumption on the finite variance of the loss allows using the CLT. We then obtain:
\[
A
= \frac{1}{n_{\text{test}}}
\sum_i
\Big(
\mathcal{L}\!\left(
f_\star^S(X_i^S),\, Y_i
\right)
- \mathbb{E}\!\left[
\mathcal{L}\!\left(
f_\star^S(X^S),\, Y
\right)
\right]
\Big)
= O_p(n^{-1/2}).
\]

\paragraph{(5) Remainder term.}
Using Lemma~1 of Kennedy (2020),we obtain,
\begin{align*}
r_n
&= \frac{1}{n_{\text{test}}}
\sum_i
\left(
\mathcal{L}\!\left(
\hat f^S_\mathrm{cal}(X_i),\, Y_i
\right)
- \mathcal{L}\!\left(
f_\star^S(X_i^S),\, Y_i
\right)
\right) \\
&- 
\e{\left(
\mathcal{L}\!\left(
\hat f^S_\mathrm{cal}(X),\, Y
\right)
- \mathcal{L}\!\left(
f_\star^S(X^S),\, y
\right)
\right) }
\\
&= O_p\!\left(
\frac{
\left\|
\mathcal{L}\left(\hat f^S_\mathrm{cal}(X), Y\right)
-\mathcal{L}\left(f_\star^S(X^S), Y\right)
\right\|
}{\sqrt{n}}
\right).
\end{align*}

\paragraph{Lipschitz control.} 
Since the model $f^S_\mathrm{cal}$ predicts from permuted inputs, the assumption of loss consistency does not allow us to conclude on the convergence of the numerator.
To prove that it indeed converges to 0, we split the error into a first part capturing the estimation of the marginal distribution from $n_\mathrm{cal}$ permutation and a second part related to the model's error. 
We use the notation $\frac{1}{n_{\text{cal}}}\sum_{\ell=1}^{n_\mathrm{cal}}{f^\mathrm{cal}_\star(X_\ell)}$ to indicate that we average predictions over $n_\mathrm{cal}$ permutations of $X^{\bar S}$, leaving $X^{S}$ unchanged.
Indeed, note that using the decomposition and the Lipschitz assumption, we have that
\begin{align*}
\Big\|
\mathcal{L}&\!\left(
\hat f^S_\mathrm{cal},\, Y
\right)
- \mathcal{L}\!\left(
f_\star^S(X^S),\, Y
\right)
\Big\|
\\&\leq \left\|\mathcal{L}\left(\hat f^S_\mathrm{cal}, Y\right)- \mathcal{L}\!\left(\frac{1}{n_{\text{cal}}}\sum_{\ell}{f^\mathrm{cal}_\star(X_\ell)}, Y\right)\right\|+ \left\|\mathcal{L}\!\left(\frac{1}{n_{\text{cal}}}\sum_{\ell}{f^\mathrm{cal}_\star(X_\ell)}, Y\right)- \mathcal{L}\!\left(f_\star^S(X^S), Y\right)\right\|\\
&\leq \left\|\mathcal{L}\!\left(\hat f^S_\mathrm{cal}, Y\right)- \mathcal{L}\!\left(\frac{1}{n_{\text{cal}}}\sum_{\ell}{f^\mathrm{cal}_\star(X_\ell)}, Y\right)\right\|+ K \left\|\frac{1}{n_{\text{cal}}}\sum_{\ell}{f^\mathrm{cal}_\star(X_\ell)}- f_\star^S(X^S)\right\|.
\end{align*}

For the second term we use the CLT; for the first term,
we use Assumption \ref{assumption:loss_continuity}, which extends to $\hat f^S_\mathrm{cal}$ thanks to Assumption \ref{assumption:support_positivity} on the support positivity. Hence $r_n = o_p(n^{-1/2})$.

\paragraph{(6) Term $B$.} 
We treat separately the marginalization error arising from averaging over the $n_\mathrm{cal}$ permutations from the error induced by the estimation of the model. 
To do so, we decompose the errors:
\begin{align*}
B
&= \mathbb{E}\!\left[
\mathcal{L}\!\left(
\hat f^S_\mathrm{cal}(X),\, y
\right)
\right]
- \mathbb{E}\!\left[
\mathcal{L}\!\left(
\mathbb{E}[{f_\star}(X^{-S})\mid X^S],\, y
\right)
\right] \\
&=\left(\mathbb{E}\!\left[
\mathcal{L}\!\left(
\hat f^S_\mathrm{cal}(x),\, y
\right)
\right]
-\mathbb{E}\!\left[
\mathcal{L}\!\left(
 \mathbb{E}[\hat{f}(X^{-S})\mid X^S],\, y
\right)
\right]\right)\\&\quad+\left(\mathbb{E}\!\left[
\mathcal{L}\!\left(
 \mathbb{E}[\hat{f}(X^{-S})\mid X^S],\, y
\right)
\right]
- \mathbb{E}\!\left[
\mathcal{L}\!\left(
\mathbb{E}[{f_\star}(X^{-S})\mid X^S],\, y
\right)
\right]\right)\\
&= B^1 + B^2.
\end{align*}

For the first term, we use the Lipschitz assumption and the CLT to have that
\[
B^1
\le
K\mathbb{E}\!\left[
\left|
\hat f^S_\mathrm{cal}(x)
- \mathbb{E}[\hat{f}(X^{-S})\mid X^S]
\right|
\right]
= O_p(n^{-1/2}).
\]
\paragraph{(7) Conclusion.}
We conclude that
\[
\hat{v}_f(S) - v_{f_\star}(S)
=
\underbrace{\mathbb{E}\!\left[
\mathcal{L}\!\left(
\mathbb{E}[\hat{f}(X^{-S})\mid X^S],\, y
\right)
\right]
-
\mathbb{E}\!\left[
\mathcal{L}\!\left(
\mathbb{E}[{f_\star}(X^{-S})\mid X^S],\, y
\right)
\right]}_{\mathcal{E}^S}
+ O_p(n^{-1/2}).
\]

Similarly, we have 
\begin{align*}
    \hat{v}_f(S\cup\{j\}) - v_{f_\star}(S\cup\{j\}) = \mathcal{E}^{S\cup\{j\}} + O_p(n^{-1/2}) 
\end{align*}

\section{Extended experiments}
\label{section:extended_experiments}
The extended experiments provide a comprehensive analysis of the results introduced in \autoref{section:experiments}, offering a detailed comparison of the \textit{ensemble} versus \textit{sub-models} strategies across a broad range of experimental conditions for \loco (\autoref{fig:supplement_mse_loco}) and \sage (\autoref{fig:supplement_mse_sage}). \cfi (\autoref{fig:supplement_mse_cfi}) is left for Section \ref{sect:cfi_exps}.
We specifically isolate and compare two primary sources of estimation error: training sample instability and algorithmic stochasticity, examining the corresponding mitigation strategies—bagging and voting, respectively. 
While the \textit{ensemble} approach consistently reduces estimation error, our results indicate that its benefits are particularly pronounced when mitigating sampling instability. 
Furthermore, we separate the results to analyze the estimation error for features within the true support versus null features (not in the support). 
This distinction reveals that while the MSE generally decreases at a faster rate for null features, the performance gap between the two strategies is often wider in this context. 
Finally, we report these findings across all three benchmarks: Friedman 1, Ishigami and G-function.

\begin{figure}[h!]
    \centering
    \includegraphics[width=0.9\linewidth]{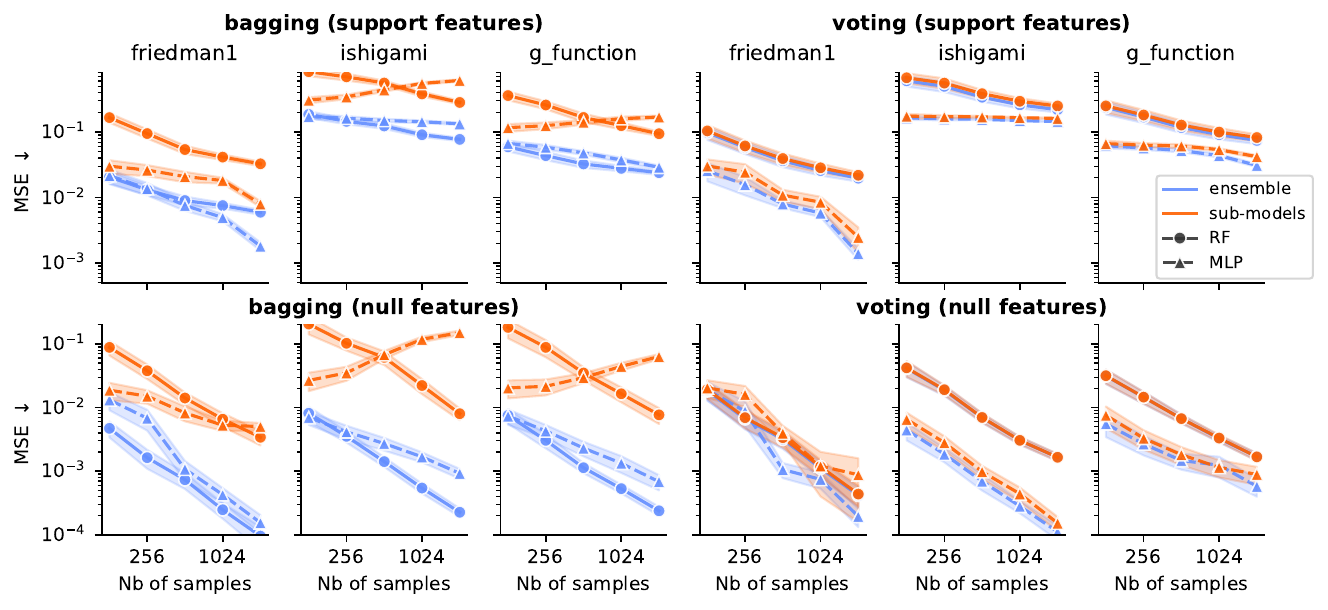}
    \caption{\textbf{Model-level ensembling consistently reduces \loco importance estimation error.}
    Mean Squared Error (MSE) of \loco feature importance estimates as a function of sample size ($n$) for Random Forest (RF) and Multi-Layer Perceptron (MLP) architectures.
    The plots compare two estimation strategies: ensemble (blue), where importance is derived from the aggregated model, and sub-models (orange), where importance scores from individual models are averaged.
    Top row: MSE for features within the true support. Bottom row: MSE for null features.
    Columns display results for the three benchmark datasets: Friedman 1, Ishigami, and G-function.
    The left panels correspond to bagging (ensembling over bootstrap training samples), while the right panels correspond to voting (ensembling over random initializations).
    The ensemble strategy consistently yields lower MSE, particularly for support features, validating the theoretical reduction in bias.
    Results are averaged over 100 independent trials from each data-generating process; error bars indicate 95\% bootstrap confidence intervals of the mean.}
    \label{fig:supplement_mse_loco}
\end{figure}
\begin{figure}[h!]
    \centering
    \includegraphics[width=0.9\linewidth]{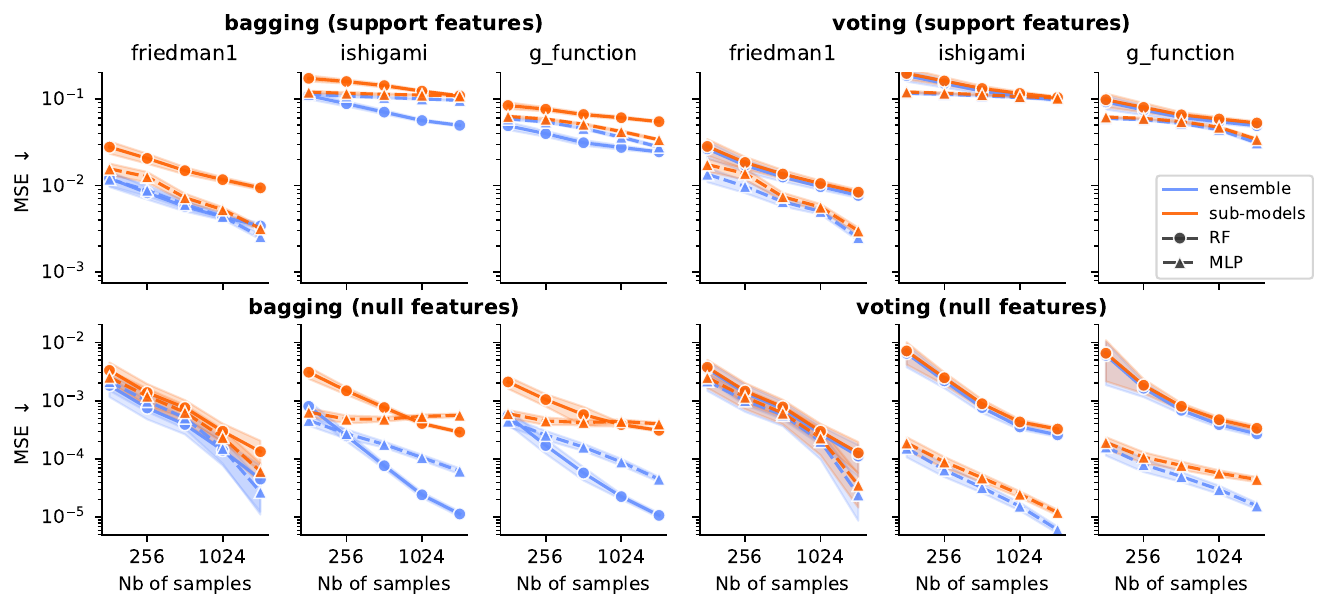}
    \caption{\textbf{Model-level ensembling consistently reduces \sage importance estimation error.}
    Mean Squared Error (MSE) of \sage feature importance estimates as a function of sample size ($n$) for Random Forest (RF) and Multi-Layer Perceptron (MLP) architectures.
    The plots compare two estimation strategies: ensemble (blue), where importance is derived from the aggregated model, and sub-models (orange), where importance scores from individual models are averaged.
    Top row: MSE for features within the true support. Bottom row: MSE for null features.
    Columns display results for the three benchmark datasets: Friedman 1, Ishigami, and G-function.
    The left panels correspond to bagging (ensembling over bootstrap training samples), while the right panels correspond to voting (ensembling over random initializations).
    The ensemble strategy consistently yields lower MSE, particularly for support features, validating the theoretical reduction in bias.
    Results are averaged over 100 random seeds; error bars represent the standard deviation of the ROC AUC.}
    \label{fig:supplement_mse_sage}
\end{figure}

Complementing the estimation error analysis, we evaluate the feature selection capabilities of each strategy using the Area Under the ROC Curve (ROC AUC). 
In this context, feature selection is framed as a binary classification task: distinguishing relevant features from null features based on their estimated importance scores. 
We define a feature as relevant if its asymptotic importance (computed at $n=10^5$) is strictly positive; this criterion captures both the Markov blanket (for LOCO) and all functionally linked features (for SAGE). 
We report the mean ROC AUC and its standard deviation across 100 random seeds.
The extended results, presented in \autoref{fig:supplement_auc_loco} (\loco) and \autoref{fig:supplement_auc_sage} (\sage), compare the \textit{ensemble} and \textit{sub-models} strategies across both bagging and voting regimes. 
These plots demonstrate that model-level ensembling generally yields higher ROC AUC scores, particularly in low-sample regimes, indicating a superior ability to distinguish relevant from spurious features.

\begin{figure}[h!]
    \centering
    \includegraphics[width=0.9\linewidth]{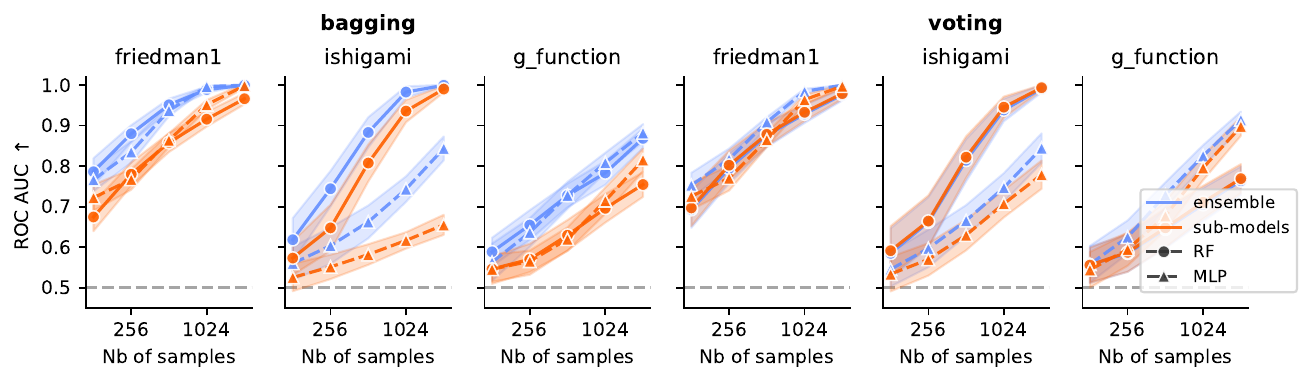}
    \caption{\textbf{Model-level ensembling improves feature selection performance with \loco.}
    Area Under the ROC Curve (ROC AUC) for feature selection as a function of sample size ($n$). 
    The ROC curves evaluate the ability to classify features as relevant versus irrelevant, where ground truth relevance is defined by having a non-zero asymptotic importance value (computed at $n=10^5$).
    The estimated feature importance score serves as the decision function for the ROC.
    The plots compare the ensemble strategy (blue) against the sub-models strategy (orange) for RF and MLP architectures across the three benchmark datasets (Friedman 1, Ishigami, G-function).
    Left panels: Results using bagging (bootstrap training samples). 
    Right panels: Results using voting (random initializations).
    Higher AUC values indicate that the method assigns higher importance scores to relevant features compared to null features.   
   Results are averaged over 100 random seeds; error bars represent the standard deviation of the ROC AUC.}
    \label{fig:supplement_auc_loco}
\end{figure}
\begin{figure}[h!]
    \centering
    \includegraphics[width=0.9\linewidth]{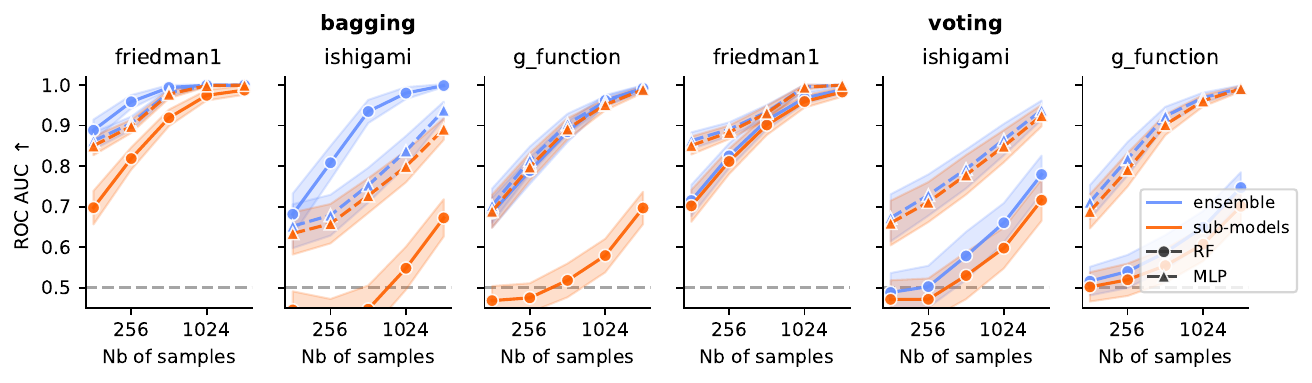}
    \caption{\textbf{Model-level ensembling improves feature selection performance with \sage.}
    Area Under the ROC Curve (ROC AUC) for feature selection as a function of sample size ($n$). 
    The ROC curves evaluate the ability to classify features as relevant versus irrelevant, where ground truth relevance is defined by having a non-zero asymptotic importance value (computed at $n=10^5$).
    The estimated feature importance score serves as the decision function for the ROC.
    The plots compare the ensemble strategy (blue) against the sub-models strategy (orange) for RF and MLP architectures across the three benchmark datasets (Friedman 1, Ishigami, G-function).
    Left panels: Results using bagging (bootstrap training samples). 
    Right panels: Results using voting (random initializations).
    Higher AUC values indicate that the method assigns higher importance scores to relevant features compared to null features.   
   Results are averaged over 100 random seeds; error bars represent the standard deviation of the ROC AUC.}
    \label{fig:supplement_auc_sage}
\end{figure}

\begin{figure}
    \centering
    \includegraphics[width=0.667\linewidth]{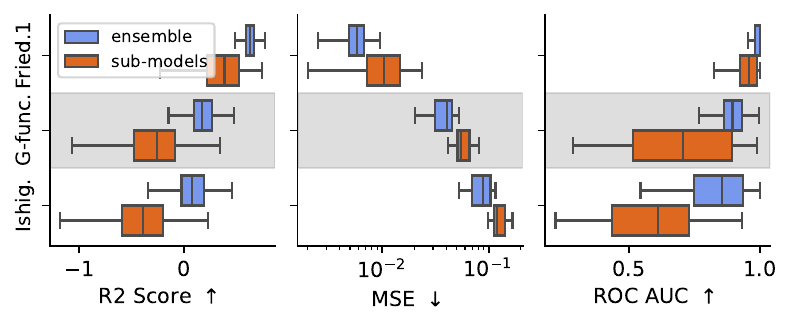}
    \caption{ 
        \textbf{Ensemble-level importance improves estimation across diverse datasets.}
        Measuring \sage importance directly on the bagging ensemble (blue) consistently outperforms the average of sub-model scores (orange). 
        Higher $R^2$ scores demonstrate the ensemble's superior predictive performance. 
        This improved prediction translates into more accurate numerical estimates for ranking (lower MSE) and more reliable feature selection (higher ROC AUC). 
        These gains are robust across the Friedman 1, G-function, and Ishigami datasets (rows of the plots), which present varying non-linearities and levels of feature interactions. 
        For all datasets, the number of samples was set to n=512.
        Box plots represent results across 100 random seeds and both MLP and RF architectures.
    }
    \label{fig:supplement_benchmark_sage}
\end{figure}

\clearpage

\section{Analysis of \cfi}
\label{section:cfi}

\subsection{Theoretical analysis}
Conditional Feature Importance (\cfi) quantifies the importance of a feature $j$ by comparing the risk of a model $f$ to its risk under a perturbed input $X^{\pi(j|-j)}$. In this scheme, the $j^{th}$ feature is resampled from the conditional distribution $P(X^j|X^{-j})$ while all other features remain unchanged.
For this analysis, we assume an additive noise model $y = f_\star(X) + \epsilon$, where $\epsilon \perp\!\!\!\perp X$ and $\mathbb{E}[\epsilon] = 0$. Under the mean squared error (MSE) loss, the true \cfi is defined as:

\begin{align}
    \psi^\mathrm{cfi}(j) &= \e{\left(y - f(X^{\pi(j|-j)})\right)^2} -  \e{\left(y - f(X)\right)^2}
\end{align}
In practice, the conditional distribution $P(X^j|X^{-j})$ is unknown and must be estimated from data. 
To characterize the impact of this estimation, we denote $X_n^{\pi(j|-j)}$ as the random variable sampled from the estimated conditional distribution, while $X_\star^{\pi(j|-j)}$ represents the true conditional distribution. The total estimation error for \cfi is:
\begin{align*}
    \psi^\mathrm{cfi}_n - \psi^\mathrm{cfi}_\star &= \frac{1}{n}\sum_{i=1}^n{\left(y - \both(X_n^{\pi(j|-j)})\right)^2} -  \frac{1}{n}\sum_{i=1}^n{\left(y - \both(X)\right)^2} \\
    &- \e{\left(y - f_\star(X_\star^{\pi(j|-j)})\right)^2} +  \e{\left(y - f_\star(X)\right)^2}
\end{align*}
Following the same logic as in \autoref{section:proofs}, we can simplify the expression by accounting for the $O_P(n^{-1/2})$ test-set estimation and obtain:
\begin{align*}
    \psi^\mathrm{cfi}_n - \psi^\mathrm{cfi}_\star &= \e{\left(y - \both(X_n^{\pi(j|-j)})\right)^2} -  \e{\left(y - f_\star(X_\star^{\pi(j|-j)})\right)^2} \\
    &- \e{\left(y - \both(X)\right)^2} + \e{\left(y -f_\star(X)\right)^2} + O_P(n^{-1/2}).
\end{align*}
By applying Proposition 3.4 from \citealt{paillard2024measuring}, the error can be decomposed as:
\begin{align*}
    \psi^\mathrm{cfi}_n - \psi^\mathrm{cfi}_\star = \e{\left(\both(X_\star^{\pi(j|-j)}) - \both(X_n^{\pi(j|-j)})\right)^2} + O_P(\e{\both - f_\star}) + O_P(n^{-1/2}).
\end{align*}
This result reveals a fundamental difference in behavior compared to \loco or \sage. For \cfi, the dependence on model estimation error is captured by the linear term $O_P(\both - f_\star)$, which typically vanishes faster than the quadratic excess risk terms governing LOCO. 
Instead, the primary error component stems from the estimation of the conditional distribution.
As evidenced by the corollary to Proposition 3.4 in \citealt{paillard2024measuring}, for a Lipschitz-continuous model $\both$, the error in estimating \cfi can be expressed as:
\begin{align}
    \label{eq:error_cfi}
    \psi^\mathrm{cfi}_n - \psi^\mathrm{cfi}_\star = O_P\left(||X_\star^{\pi(j|-j)} - X_n^{\pi(j|-j)}||_2^2\right) + O_P(\e{\both - f_\star}) + O_P(n^{-1/2})
\end{align}
This linear dependence implies that ensembling strategies designed to mitigate model excess risk exert a significantly less pronounced impact on \cfi compared to risk-difference measures such as \loco or \sage. 
The empirical results presented in \autoref{fig:supplement_mse_cfi} and \autoref{fig:supplement_auc_cfi} substantiate this theoretical distinction: while the \textit{ensemble} strategy does not underperform relative to \textit{sub-models}, the performance gains are marginal compared to the substantial benefits observed for \loco (\autoref{fig:supplement_mse_loco} and \autoref{fig:supplement_auc_loco}) and \sage (\autoref{fig:supplement_mse_sage} and \autoref{fig:supplement_auc_sage}). 
Furthermore, these findings suggest that the error associated with estimating the conditional distribution—denoted as $O_P\left(||X_\star^{\pi(j|-j)} - X_n^{\pi(j|-j)}||_2^2\right)$—is not the limiting factor in practice. 
This aligns with the ``Model-X" assumption \cite{candes2018panning}, which posits that modeling the relationships among covariates (i.e., conditional distributions) is generally a more tractable statistical problem than modeling the relationship between input features $X$ and a target $Y$.

\section{Analysis of \textit{PFI}}
\label{section:pfi}
The analysis of PFI follows that of CFI (Section C.1) with one modification.

PFI replaces the conditional permutation $X^{\pi(j|-j)} \sim P(X^j|X^{-j})$ with a marginal permutation $X^{\pi(j)} \sim P(X^j)$. 
Since marginal permutation can produce inputs outside the joint distribution's support, loss consistency (Assumption \ref{assumption:loss_continuity}) does not automatically extend to the permuted inputs. 
Specifically, when applying Proposition 1 to replace the empirical risk $\mathcal{R}_n(\hat{f}(X^{\pi(j)}))$ with its expectation, bounding the remainder $r_n$ requires $\|\mathcal{L}(\hat{f}(X^{\pi(j)}), Y) - \mathcal{L}(f_\star(X^{\pi(j)}), Y)\|_2 = o_P(1)$, which is only guaranteed under the additional \textbf{support positivity} assumption (Assumption \ref{assumption:support_positivity}), as for \textit{SAGE}.

Under this additional assumption, the remainder of the proof is identical to \textit{CFI}, yielding:

\begin{align*}
    \psi^\mathrm{pfi}_n - \psi^\mathrm{pfi}_\star = O_P\left(\|X_\star^{\pi(j)} - X_n^{\pi(j)}\|_2^2\right) + O_P(\mathbb{E}[\hat{f} - f_\star]) + O_P(n^{-1/2})
\end{align*}

The same model is applied to both terms of the risk difference, so the quadratic excess risk cancels.

We replicate the benchmark from \autoref{fig:benchmark} using \textit{PFI} instead of \loco, aggregating results over both MLP and RF architectures (\autoref{fig:supplement_benchmark_pfi}).
Although the ensemble achieves substantially higher $R^2$ scores, these predictive gains barely translate into improved feature importance estimation.
This is consistent with the theoretical analysis in \autoref{subsec:pfi}: because PFI has only a linear dependence on model excess risk, reducing the excess risk through ensembling has a much weaker effect on importance estimation error than for risk-difference methods such as \loco or \sage.

\begin{figure}[h!]
    \centering
    \includegraphics[width=0.667\linewidth]{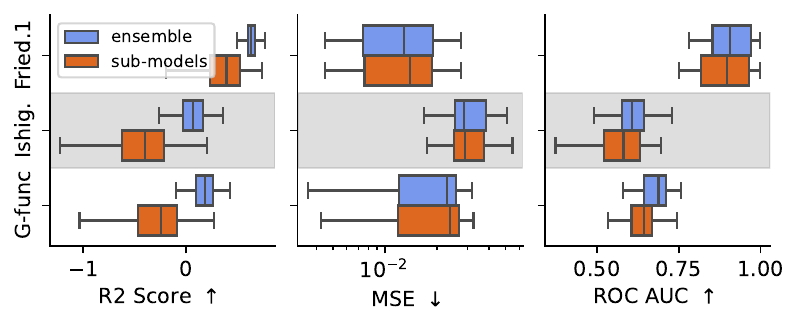}
    \caption{
        \textbf{Predictive gains from ensembling do not translate into better \textit{PFI} estimation.}
        Same experimental setup as \autoref{fig:benchmark}, aggregating over both MLP and RF architectures.
        Although the ensemble (blue) achieves substantially higher $R^2$ scores than the sub-models average (orange), the resulting improvements in importance MSE and feature selection ROC AUC are barely visible.
        This confirms the theoretical analysis in \autoref{subsec:pfi}: because PFI depends only linearly on model estimation error, reduced excess risk does not strongly benefit importance estimation.
        Results are shown across the Friedman 1, G-function, and Ishigami datasets (rows), with $n=512$.
        Box plots represent results across 100 random seeds and both MLP and RF architectures.
    }
    \label{fig:supplement_benchmark_pfi}
\end{figure}

\section{Analysis of \textit{IG}}
\label{section:ig}

This work focused on methods providing population-level importance scores. 
Yet, we include an additional analysis on Integrated gradients (\textit{IG}), which is an instance label attribution method. 
For a differentiable model $f$, the \textit{IG} attributes importance (integrating the gradients along a path) to feature $j$ at input $x$ (relative to a baseline $x'$) following,
\begin{align*}
    \mathrm{IG}(x, f) = (x_j - x'_j)  \int_0^1 \nabla_j (  f(x' + \alpha(x - x'))) d\alpha
\end{align*}

Unlike the non-linear importance methods analyzed in the rest of this work, the analysis for IG concludes immediately because it is a linear operator.
For an ensemble $f_{ens} = \frac{1}{B} \sum_{b=1}^B f_b$, computing the IG on the ensemble is mathematically identical to averaging the IGs of the individual sub-models. 
By the linearity of differentiation and integration:
\begin{align*}
    \text{IG}_j(x, f_{ens}) &= (x_j - x'_j) \int_0^1 \nabla_j \left( \frac{1}{B} \sum_{b=1}^B f_b(x' + \alpha(x - x')) \right) d\alpha \\
    &= \frac{1}{B} \sum_{b=1}^B (x_j - x'_j)  \int_0^1 \nabla_j (  f_b(x' + \alpha(x - x'))) d\alpha\\
    &=\frac{1}{B} \sum_{b=1}^B \text{IG}_j(x, f_b)
\end{align*}

Thus, for IG, both aggregation strategies yield the exact same attribution.
This can also easily be verified empirically. 
\subsection{Experiments}\label{sect:cfi_exps}

\begin{figure}[h!]
    \centering
    \includegraphics[width=0.9\linewidth]{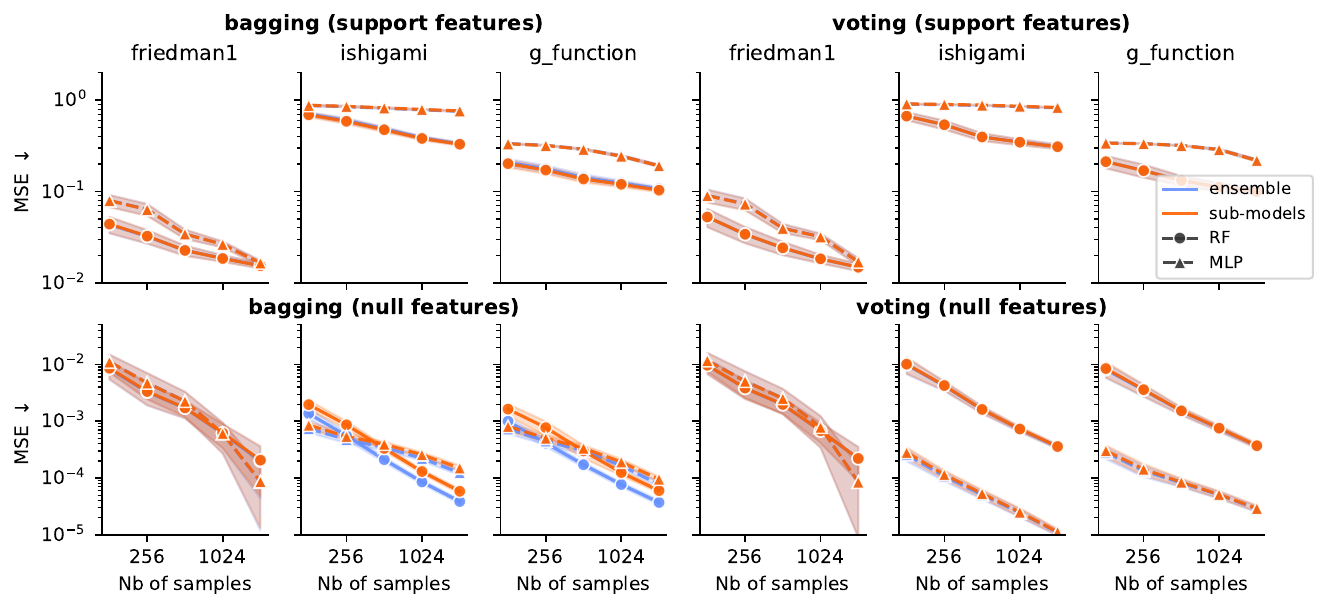}
    \caption{\textbf{Effect of \textit{ensembling} versus \textit{sub-models} strategy on the estimation error with \cfi .}
    Mean Squared Error (MSE) of \cfi feature importance estimates as a function of sample size ($n$) for Random Forest (RF) and Multi-Layer Perceptron (MLP) architectures.
    The plots compare two estimation strategies: ensemble (blue), where importance is derived from the aggregated model, and sub-models (orange), where importance scores from individual models are averaged.
    Top row: MSE for features within the true support. Bottom row: MSE for null features.
    Columns display results for the three benchmark datasets: Friedman 1, Ishigami, and G-function.
    The left panels correspond to bagging (ensembling over bootstrap training samples), while the right panels correspond to voting (ensembling over random initializations).
    Results are averaged over 100 random seeds; error bars represent the standard deviation of the ROC AUC.}
    \label{fig:supplement_mse_cfi}
\end{figure}

\begin{figure}[h!]
    \centering
    \includegraphics[width=0.9\linewidth]{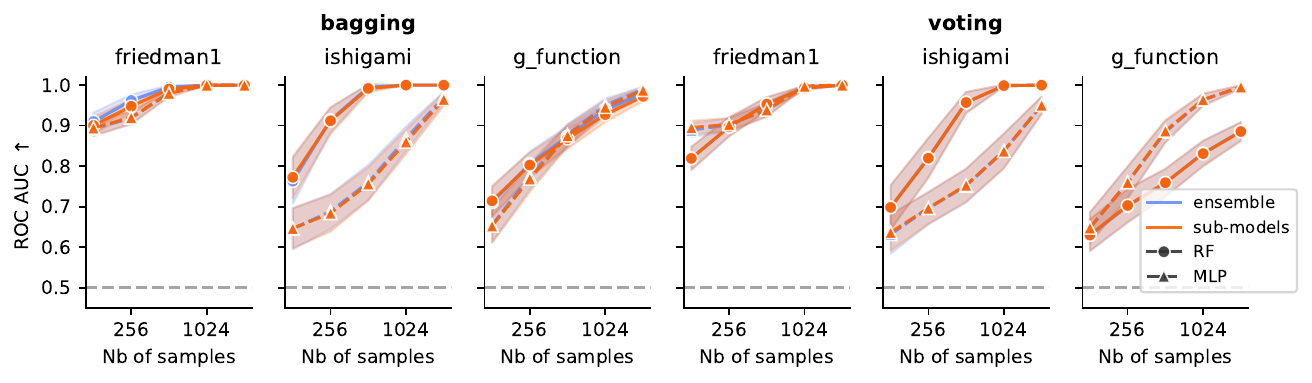}
    \caption{\textbf{Effect of \textit{ensembling} versus \textit{sub-models} on feature selection performance with \cfi.}
    Area Under the ROC Curve (ROC AUC) for feature selection as a function of sample size ($n$). 
    The ROC curves evaluate the ability to classify features as relevant versus irrelevant, where ground truth relevance is defined by having a non-zero asymptotic importance value (computed at $n=10^5$).
    The estimated feature importance score serves as the decision function for the ROC.
    The plots compare the ensemble strategy (blue) against the sub-models strategy (orange) for RF and MLP architectures across the three benchmark datasets (Friedman 1, Ishigami, G-function).
    Left panels: Results using bagging (bootstrap training samples). 
    Right panels: Results using voting (random initializations).
    Higher AUC values indicate that the method assigns higher importance scores to relevant features compared to null features.   
   Results are averaged over 100 random seeds; error bars represent the standard deviation of the ROC AUC.}
    \label{fig:supplement_auc_cfi}
\end{figure}
\clearpage

\section{Identification of proteomic signatures for BMI with \sage and \cfi in the UK Biobank}
\label{section:ubkk_sage}
Complementing the analysis in \autoref{fig:benchmark_ukbb}, we used \sage to identify proteomic signatures of body mass index in the UK Biobank. 
\autoref{fig:ukbb_sage} reports the ten features with the largest absolute importance. Given the high computational complexity of \sage, we restricted the evaluation to a random subset of 1,024 test samples per fold. 
The top-ranked proteins corroborate the findings identified by \loco, notably identifying LEP (Leptin), FABP4 (Fatty Acid Binding Protein 4), and ADM (Adrenomedullin) as key predictors. 
Furthermore, the distinction between aggregation strategies persists: the \textit{sub-models} approach exhibits larger standard deviations and underestimates the importance of Insulin-like Growth Factor Binding Proteins (IGFBP), particularly IGFBP-1, compared to the \textit{ensemble} strategy. 

\begin{figure}[h!]
    \centering
    \includegraphics[width=0.7\linewidth]{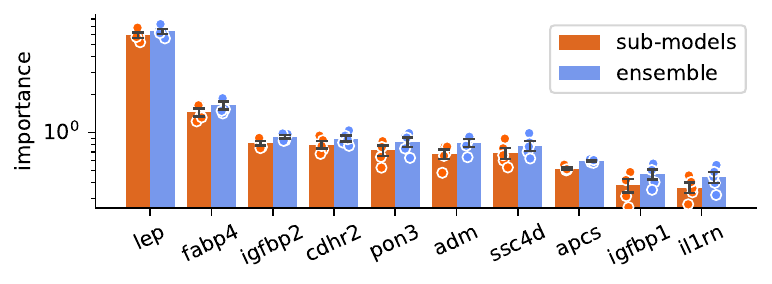}
    \caption{
        \textbf{Identification of proteomic signatures for body mass index in the UK Biobank.}
         Feature importance ranking of the top 10 proteins identified with \sage for the prediction of BMI from 2,922 proteins measured in plasma using the Olink platform ($n = 46,382$ participants). 
         The predictive model is an ensemble comprising 10 \textit{LightGBM} models. 
         Error bars indicate one standard deviation estimated via 5-fold cross-validation. 
         For each fold, the ensemble importance score and the mean importance across individual models are represented by blue and orange circles, respectively.
     }
    \label{fig:ukbb_sage}
\end{figure}

\begin{figure}[h!]
    \centering
    \includegraphics[width=0.7\linewidth]{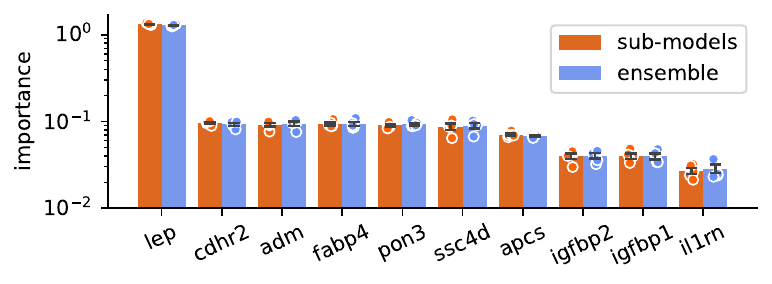}
    \caption{
        \textbf{Identification of proteomic signatures for body mass index in the UK Biobank.}
         Feature importance ranking of the top 10 proteins identified with \cfi for the prediction of BMI from 2,922 proteins measured in plasma using the Olink platform ($n = 46,382$ participants). 
         The predictive model is an ensemble comprising 10 \textit{LightGBM} models. 
         Error bars indicate one standard deviation estimated via 5-fold cross-validation. 
         For each fold, the ensemble importance score and the mean importance across individual models are represented by blue and orange circles, respectively.
     }
    \label{fig:ukbb_sage}
\end{figure}

\section{Breast cancer subtype classification (TCGA BRCA)}
\label{section:brca}

Following the experimental protocol of \citet{catav21marginal} and \citet{janssen23ultra}, we evaluate the two aggregation strategies on a breast cancer subtype classification task from the TCGA BRCA RNA-Seq gene expression dataset.
The dataset comprises 572 patients and 50 genes, where the task is to classify samples into 4 subtypes.
Among the 50 genes, 10 have been previously reported in the literature as drivers of breast cancer subtype classification (BCL11A, EZH2, IGF1R, LFNG, BRCA1, SLC22A5, CDK6, BRCA2, TEX14, CCND1) and serve as ground truth; the remaining 40 genes are permuted across patients to break their association with the labels.
This setup provides a controlled benchmark with known ground truth for evaluating feature selection performance on real biological data.

We compare two model architectures: a Multi-Layer Perceptron (MLP) and a Logistic Regression with $\ell_2$ regularization.
For each architecture, an ensemble of 10 models is constructed. Feature importance is measured using \loco with the cross-entropy loss.
Evaluation is conducted over 10-fold stratified cross-validation, repeated across 10 random seeds.

\autoref{fig:brca_results} summarizes the results.
Panels (a) and (d) confirm that both architectures achieve good predictive performance (AUC $\approx 0.9$), with the ensemble outperforming sub-models, as expected.
Panels (b) and (e) display the fraction of ground-truth driver genes recovered as a function of the number of top-ranked genes selected. The \textit{ensemble} strategy consistently recovers driver genes faster than the \textit{sub-models} strategy, particularly for small values of $K$, indicating that measuring importance on the aggregated model yields a more accurate ranking.
Panels (c) and (f) report the AUC for classifying genes as drivers versus non-drivers using each strategy's importance scores as a decision function. The \textit{ensemble} approach achieves higher and more stable AUC scores across seeds.
These results on real biological data with established ground truth corroborate the findings from the synthetic experiments: measuring feature importance at the ensemble level leads to improved identification of truly relevant features.
\begin{figure}[h!]
    \centering
    \includegraphics[width=0.9\linewidth]{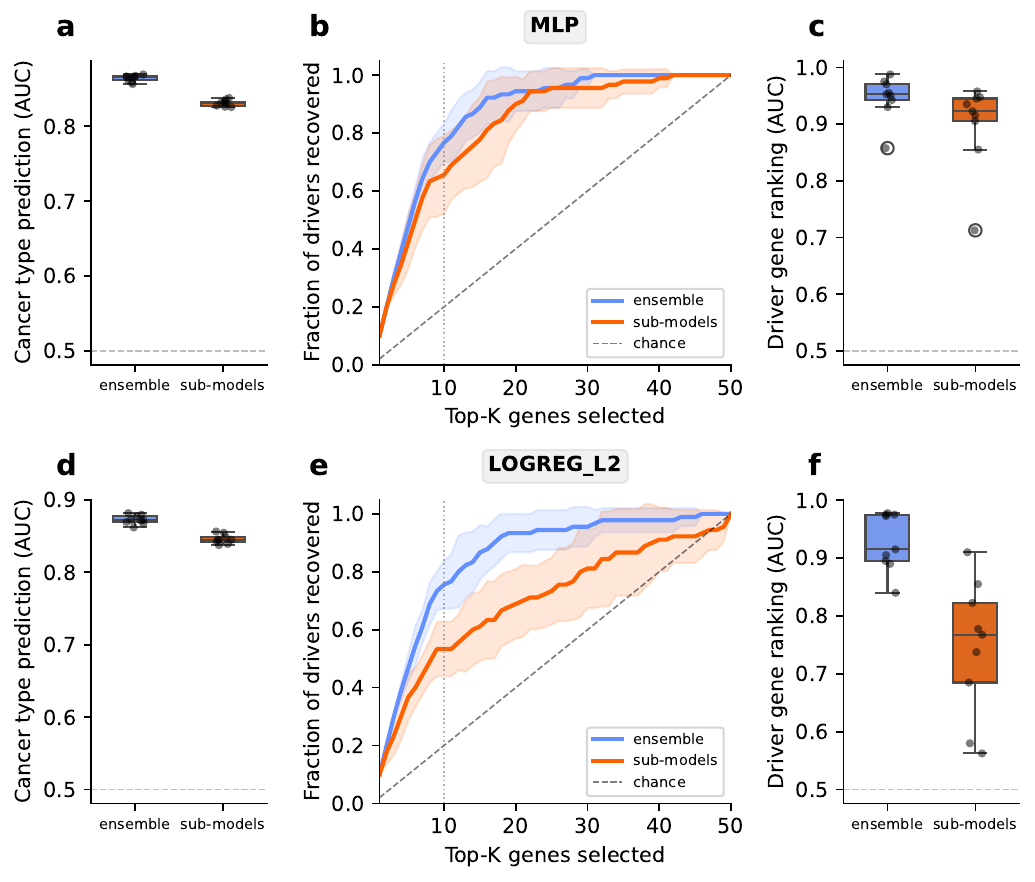}
    \caption{
        \textbf{Recovery of breast cancer driver genes on the TCGA BRCA dataset.}
        Comparison of the \textit{ensemble} (blue) and \textit{sub-models} (orange) strategies for identifying known driver genes using \loco importance.
        Top row: MLP. Bottom row: Logistic Regression.
        \textbf{(a, d)} Predictive performance (ROC AUC) of the ensemble versus the average performance of sub-models, evaluated on held-out folds.
        \textbf{(b, e)} Fraction of the 10 ground-truth driver genes recovered among the top-$K$ ranked genes. The vertical grey line indicates $K=10$. Shaded areas represent $\pm 1$ standard deviation across 10 seeds.
        \textbf{(c, f)} AUC for classifying genes as drivers using importance scores as the decision function.
    }
    \label{fig:brca_results}
\end{figure}

\section{High-dimensional simulation ($d=100$)}
\label{section:highdim}

To extend the synthetic benchmarks from \autoref{section:experiments} (where $d=20$) to a higher-dimensional regime, we simulate data with $d=100$ features and a support of size 25.
The target is a non-linear function of the support, built from randomly sampled combinations of absolute value, sine, cosine, sigmoid, and square transformations (SNR $= 1$).
The model is an overparameterized MLP (2 hidden layers of 256 neurons), with bagging ensembles of 10 models and \loco importance. Ground-truth importance is estimated at $n=10{,}000$.

As shown in \autoref{fig:highdim_p100}, the \textit{ensemble} strategy achieves consistently lower importance MSE and higher feature selection AUC across all sample sizes (results aggregated over 100 random seeds), confirming that the benefits observed in \autoref{fig:benchmark} extend to higher dimensions with larger support and overparameterized models.

\begin{figure}[h!]
    \centering
    \includegraphics[width=.6\linewidth]{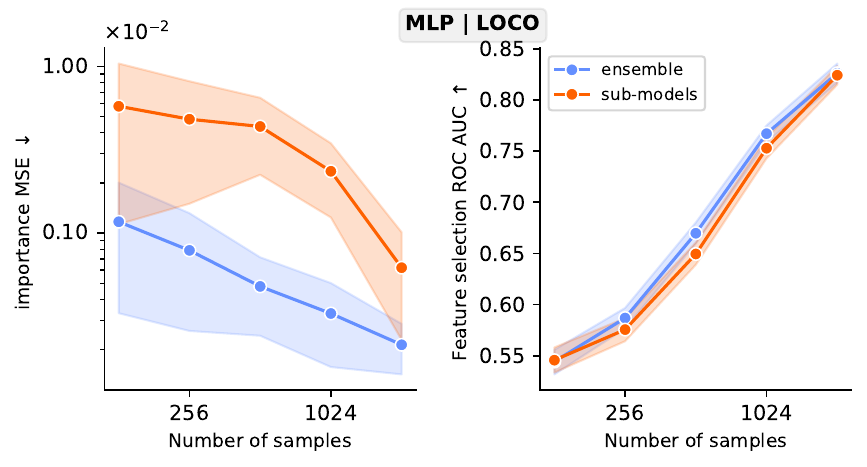}
    \caption{
        \textbf{High-dimensional simulation ($d=100$, 25 support variables).}
        \textit{Ensemble} (blue) vs.\ \textit{sub-models} (orange) using \loco importance with an overparameterized MLP and bagging (10 models).
        \textbf{Left:} MSE of importance estimates relative to asymptotic values.
        \textbf{Right:} ROC AUC for feature selection.
        Averaged over 100 seeds; shaded areas: $\pm 1$ s.d.
    }
    \label{fig:highdim_p100}
\end{figure}
\section{Extension to a tabular foundation model (TabICL)}
\label{section:tabicl}

We replicate the benchmark from \autoref{fig:benchmark} using TabICL, a state-of-the-art tabular foundation model based on in-context learning.
The setup is identical: Friedman~1, Ishigami, and G-function datasets with $d=20$, $n=512$, \loco importance, and bagging ensembles of 10 models.

\autoref{fig:tabicl} confirms that the \textit{ensemble} strategy yields substantially lower importance MSE. 
ROC AUC is near-perfect for both strategies, therefore showing little difference.

\begin{figure}[h!]
    \centering
    \includegraphics[width=.5\linewidth]{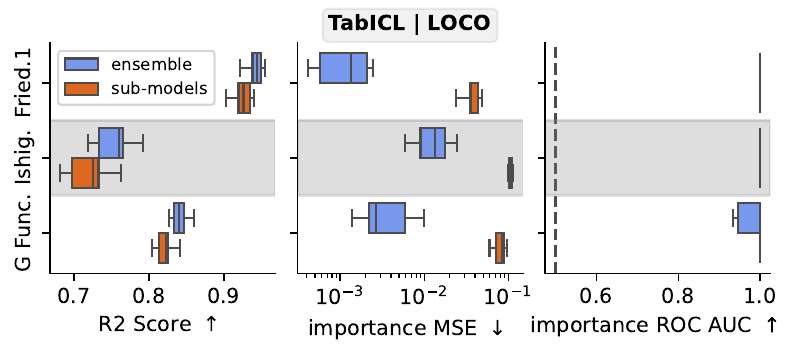}
    \caption{
        \textbf{TabICL on the synthetic benchmarks from \autoref{fig:benchmark}.}
        \textit{Ensemble} (blue) vs.\ \textit{sub-models} (orange) using \loco importance with a TabICL foundation model and bagging (10 models), $n=512$, $d=20$.
        \textbf{Left:} $R^2$ score.
        \textbf{Center:} Importance MSE (log scale).
        \textbf{Right:} Feature selection ROC AUC.
    }
    \label{fig:tabicl}
\end{figure}

\section{Stability of importance rankings}
\label{section:stability}

Beyond estimation accuracy, we evaluate the stability of importance rankings across cross-validation folds using Spearman's rank correlation, complementing the MSE and ROC AUC metrics from \autoref{section:experiments}.
For each seed, we compute the average pairwise Spearman correlation between importance rankings obtained on different held-out folds.

\autoref{fig:spearman} reports results on the three synthetic benchmarks for both \loco and \sage, using MLP and Random Forest architectures.
The \textit{ensemble} strategy consistently yields higher rank correlations across datasets, sample sizes, and importance methods, indicating more stable importance rankings across data splits.

\begin{figure}[h!]
    \centering
    \includegraphics[width=.9\linewidth]{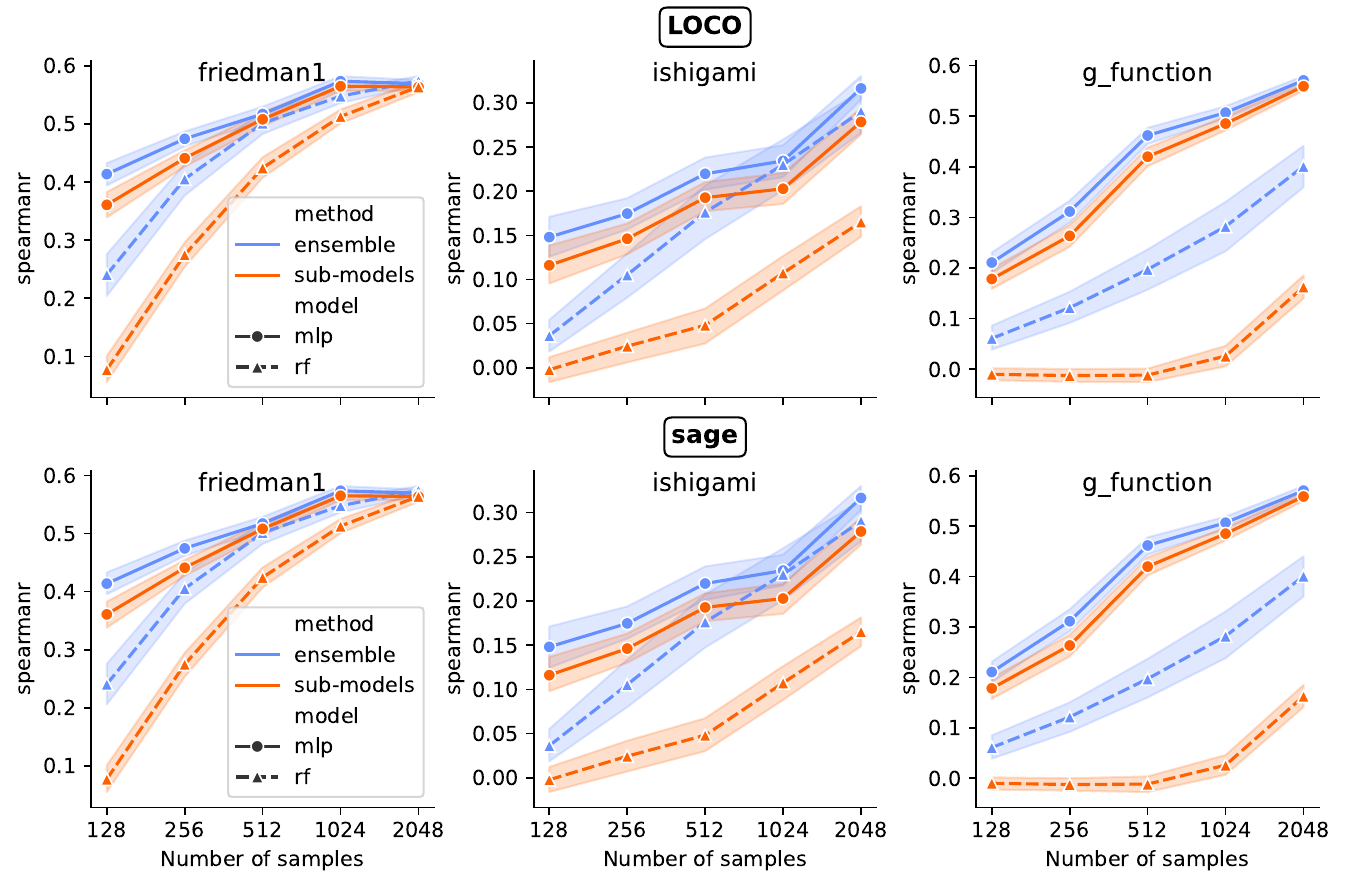}
    \caption{
        \textbf{Stability of importance rankings measured by Spearman's rank correlation.}
        Average pairwise Spearman correlation between importance rankings across cross-validation folds, as a function of sample size.
        \textit{Ensemble} (blue) vs.\ \textit{sub-models} (orange) for MLP (solid, circles) and Random Forest (dashed, triangles).
        Top row: \loco. Bottom row: \sage.
        Columns correspond to the Friedman~1, Ishigami, and G-function benchmarks ($d=20$). Averaged over 100 seeds; shaded areas: $\pm 1$ s.d.
    }
    \label{fig:spearman}
\end{figure}


\end{document}